\documentclass[journal,twoside,web]{ieeecolor}
\usepackage{algorithm}
\usepackage[noend]{algpseudocode}
\usepackage{amsfonts}
\usepackage{amsmath}
\usepackage{amssymb}
\usepackage{booktabs}
\usepackage[labelfont=bf,format=plain,justification=raggedright,singlelinecheck=false]{caption}
\usepackage{cite}
\usepackage{dsfont}
\usepackage{generic}
\usepackage{graphicx}
\usepackage{multirow}
\usepackage{tabularx}
\usepackage{textcomp}
\usepackage{tikzscale}
\usepackage{tkz-euclide}
\usepackage{stmaryrd}
\usepackage{subcaption}
\usepackage{xcolor}

\makeatletter
\let\NAT@parse\undefined
\makeatother
\usepackage{hyperref}
\hypersetup{
  colorlinks    = true, 
  urlcolor      = blue, 
  linkcolor     = blue, 
  citecolor     = blue 
}

\newcommand{\g}[3]{g_{#1}^{#2} \llbracket #3 \rrbracket}

\newcommand\newref[1]{#1\def\@currentlabel{#1}}
\newtheorem{definition}{Definition}
\newtheorem{theorem}{Theorem}[section]

\newtheorem{lemma}{Lemma}

\newtheorem{problem}{Problem}
\newtheorem{example}{Example}

\usetikzlibrary{shapes.geometric}
\tikzset{ac source/.style={
    circuit symbol lines,
    circuit symbol size = width 2 height 2,
    shape = generic circle IEC,
    /pgf/generic circle IEC/before background={
        \pgfpathmoveto{\pgfpoint{-0.8pt}{0pt}}
        \pgfpathsine{\pgfpoint{0.4pt}{0.4pt}}
        \pgfpathcosine{\pgfpoint{0.4pt}{-0.4pt}}
        \pgfpathsine{\pgfpoint{0.4pt}{-0.4pt}}
        \pgfpathcosine{\pgfpoint{0.4pt}{0.4pt}}
        \pgfusepath{stroke}
    },
    transform shape
}}

\pdfcompresslevel=9

\definecolor{policyblue}{RGB}{31,119,180}
\definecolor{policyorange}{RGB}{255,127,14}
\definecolor{policygreen}{RGB}{44,160,44}
\definecolor{policyred}{RGB}{214,39,40}
\definecolor{policypurple}{RGB}{148,103,189}
\newcommand{\solidblueline}{\raisebox{2pt}{\tikz{\draw[-,policyblue,line width = 2pt](0,0) -- (2mm,0);}}}
\newcommand{\solidorangeline}{\raisebox{2pt}{\tikz{\draw[-,policyorange,line width = 2pt](0,0) -- (2mm,0);}}}
\newcommand{\solidgreenline}{\raisebox{2pt}{\tikz{\draw[-,policygreen,line width = 2pt](0,0) -- (2mm,0);}}}
\newcommand{\solidredline}{\raisebox{2pt}{\tikz{\draw[-,policyred,line width = 2pt](0,0) -- (2mm,0);}}}
\newcommand{\solidpurpleline}{\raisebox{2pt}{\tikz{\draw[-,policypurple,line width = 2pt](0,0) -- (2mm,0);}}}

\definecolor{nnblue}{RGB}{32,116,180}
\definecolor{nnorange}{RGB}{252,124,12}
\definecolor{sdtgreen}{RGB}{48,164,44}
\definecolor{sdtred}{RGB}{216,36,44}

\def\BibTeX{{\rm B\kern-.05em{\sc i\kern-.025em b}\kern-.08em T\kern-.1667em\lower.7ex\hbox{E}\kern-.125emX}}
\markboth{\hskip25pc IEEE Transactions on Automatic Control}
{K. Chang \MakeLowercase{\textit{et al.}}: Equivalent and Compact Representations of Neural Network Controllers with Decision Trees}
\begin{document}
\title{Equivalent and Compact Representations of Neural Network Controllers With Decision Trees}

\author{%
Kevin~Chang,%
\,Nathan~Dahlin,\\
\,Rahul~Jain,~\IEEEmembership{Senior Member,~IEEE},%
\,~Pierluigi~Nuzzo,~\IEEEmembership{Senior Member,~IEEE}%
\thanks{K. Chang, R. Jain, and P. Nuzzo are with the Ming Hsieh Department of Electrical and Computer Engineering, University of Southern California, Los Angeles, USA. Email: {\tt\small \{kcchang, rahul.jain, nuzzo\}@usc.edu}. N. Dahlin was with the Department of Electrical and Computer Engineering, University of Illinois at Urbana-Champaign, Urbana, USA. He is now with the Department of Electrical Engineering, State University of New York at Albany, Albany, USA. Email: {\tt\small ndahlin@albany.edu}.}
}

\maketitle
\begin{abstract}
Over the past decade, neural network (NN)-based controllers have demonstrated remarkable efficacy in a variety of decision-making tasks.
However, their black-box nature and the risk of unexpected behaviors pose a challenge to their deployment in real-world systems requiring strong guarantees of correctness and safety.
We address these limitations by investigating the transformation of NN-based controllers into equivalent soft decision tree (SDT)-based controllers and its impact on verifiability. 
In contrast to existing work, we focus on discrete-output NN controllers including rectified linear unit (ReLU) activation functions as well as argmax operations.
We then devise an exact yet efficient transformation algorithm which automatically prunes redundant branches.
We first demonstrate the practical efficacy of the transformation algorithm applied to an autonomous driving NN controller within OpenAI Gym's CarRacing environment.
Subsequently, we evaluate our approach using two benchmarks from the OpenAI Gym environment. 
Our results indicate that the SDT transformation can benefit formal verification, showing runtime improvements of up to $21 \times$ and $2 \times$ for MountainCar-v0 and CartPole-v1, respectively.
\end{abstract}
\begin{IEEEkeywords}
Formal verification, control system, deep neural networks, soft decision trees.
\end{IEEEkeywords}

\section{Introduction}
\label{sec:introduction}
\IEEEPARstart{O}{ver} the last decade, neural network (NN)-based techniques have exhibited outstanding efficacy in a variety of decision-making tasks and some of the most challenging control problems, possibly surpassing human-level performance~\cite{mnih2013playing, ge2013stable, lecun2015deep, wang2016does, taigman2014deepface, silver2016mastering}.
However, their deployment in safety-critical applications, such as autonomous driving and flight control raises concerns due to the black-box nature of NNs and the risk of unexpected behaviors~\cite{huang2017adversarial, naik2020robustness, xu2021fast, wang2021beta}.

Researchers have proposed distillation methods~\cite{hinton2015distilling, dahlin2022practical} as a way to overcome the limitations of NN-based controllers. Distillation transfers the learned knowledge and behaviors of NN controllers to more interpretable surrogate models, such as decision trees.
WIn principle, lossless transfers are possible, as the obtained efficacy of NN controllers versus other models is not necessarily due to their richer representative capacity, but rather to the many regularization techniques available to facilitate training~\cite{ba2014deep, bastani2018verifiable}. In practice, however, the distilled models typically fall short of the real-time performance of their full counterparts, and the identification of effective metrics to characterize the approximation quality of a distilled model versus the original NN is itself an open problem. Still, allowing for some loss in fidelity, by compressing NN controllers into simpler and more compact models, distillation can facilitate formal verification.

In this paper, we focus on the \emph{exact systematic transformation} of NN-based controllers into equivalent soft decision tree (SDT)-based controllers and the empirical evaluation of the impact of this transformation on controller verifiability.
As the policies of the original NN and distilled SDT provably agree in all states, the SDT models can be examined to expedite verification before the NN is deployed.
Moreover, unlike prior work, we consider discrete-output NN controllers with rectified linear unit (ReLU) activation functions and argmax operations. 
These discrete-action NNs are particularly challenging from a verification standpoint, as they tend to amplify the approximation errors generated by reachability analysis of the closed-loop control system. 
Our contributions can be stated as follows:
\begin{itemize}
    \item We first prove that any discrete output argmax-based NN controller has an \emph{equivalent} SDT in the sense that they enact identical policies over all states.
        This is done by presenting a constructive procedure to transform any NN controller into an equivalent SDT controller.

    \item We show that our constructive procedure for creating an equivalent SDT controller is computationally practical, in that the number of nodes in the output SDT scales polynomially with the maximum width of the NN hidden layers.
        To the best of our knowledge, this is the first such computationally practical transformation algorithm.  

    \item We first demonstrate the practical viability and efficacy of the proposed transformation on the OpenAI Gym~\cite{brockman2016openai} CarRacing environment, motivated by autonomous driving.
        We then empirically evaluate the relative computational efficiency of formally verifying the SDT surrogates instead of the original NN controllers for two OpenAI Gym environments. Results indicate a significant speed improvement with the SDT controller: 21 times faster in MountainCar-v0 and twice as fast in CartPole-v1 compared to the original NN controller.
\end{itemize}

\noindent Our results suggest that the presented SDT transformation can accelerate closed-loop NN control verification, and thus potentially benefit applications where performance guarantees are critical but deep learning methods are the primary choice for control design.

In a preliminary version of this work, we first explored the transformation of neural network-based controllers into equivalent soft decision tree (SDT)-based controllers and its impact on verifiability~\cite{chang2023exact}. In this article, we expand our investigation with a comprehensive complexity analysis. Moreover, we illustrate the successful transformation of an autonomous controller, with human-like vehicle navigation, into an SDT-based controller.

\section{Related Work} \label{sec:related_work}

\emph{Distillation}~\cite{hinton2015distilling} has been used to transfer the knowledge and behavior of NN-based controllers to other models in an approximate, e.g., data-driven, manner. 
It can be performed by training shallow NNs to mimic the behavior of state-of-the-art NNs using a teacher-student paradigm~\cite{ba2014deep}.
The distillation of SDTs from expert NNs was shown to lead to better performance than direct training of SDTs~\cite{hinton2015distilling}.
Furthermore, distillation has demonstrated success in reinforcement learning (RL) problems, where the \textsc{Dagger} algorithm is used to transfer knowledge from Q-value NN models via simulation episodes~\cite{ross2011reduction}.
Finally, distilling to SDTs has also been suggested as a means to explain the internal workings of black-box NNs~\cite{frosst2017distilling}.

In contrast, exact \emph{transformation} methods have been proposed to convert feedforward NNs into simpler models while preserving equivalence~\cite{lee2019oblique, sudjianto2020unwrapping, nguyen2020towards}.
Locally constant networks~\cite{lee2019oblique} and linear networks~\cite{sudjianto2020unwrapping} have been introduced as intermediate representations to establish the equivalence between NNs and SDTs whose worst-case size scales exponentially in the maximum width of the NN hidden layers.
Nguyen~et~al.~\cite{nguyen2020towards} proposed transforming NNs with ReLUs into decision trees and then compressing the trees via a learning-based approach.
As in previous approaches, our algorithm preserves equivalence with the SDTs.
However, it also guarantees, without the need for compression, that the size of the tree scales polynomially in the width of the maximum hidden layer of the NN.
Finally, we also provide quantitative evidence about the impact of the proposed transformations on the verifiability of the controllers. 

\section{Preliminaries} \label{sec:prelims}

We review key definitions for both neural networks and soft decision trees using Fig.~\ref{fig_example} as an illustrative example. Throughout the paper, we use $A^\top$ to represent the transpose of $A$, $A_{i,j}$ to denote the element in the $i$-th row and $j$-th column of matrix $A$, and $a_i$ or $(a)_i$ to denote the $i$-th element of vector $a$.
Additionally, we define the indicator function $\mathds{1}_{x \in D}$ as a function of $x$ which evaluates to $1$ if $x \in D$ and $0$ otherwise. 
All the notation used in this paper is listed in Appendix~\ref{app_notation}.

\subsection{Neural Networks} 

Let $L$ denote the total number of layers in a NN, $N^l$ denote the \emph{width}, or number of neurons in the $l$-th layer, and define $\mathbf{N}:=(N^1,\dots,N^L)$.
In Fig.~\ref{fig_ex_NN}, we have $L=3$, with a two-neuron input layer ($N^1 = 2$), a single-neuron hidden layer ($N^2 = 1$), and a three-neuron output layer ($N^3 = 3$), so that $\mathbf{N}=(2,1,3)$. 
The nodes in layer $l$, with $l\geq 2$, are fully connected to the previous layer.
Edges and nodes are associated with weights and biases, denoted by $W^l$ and $B^l$, respectively, for $l\geq 2$.
The output at each neuron is determined by its inputs using a \emph{feedforward function}, defined below, and then passed through a rectified linear unit (ReLU) \emph{activation function}.

\begin{definition} \label{def_ff}
\emph{(NN Layer Feedforward Function).}
Given input $x \in \mathbb{R}^{N^{l-1}}$, the feedforward function $f_F^l: \mathbb{R}^{N^{l-1}} \rightarrow \mathbb{R}^{N^l}$ maps $x$ to the output of layer $l$, with $l\geq 2$, where the output of the $i$-th neuron in layer $l$ is defined as
\begin{equation}\nonumber
(f_F^l(x))_i := \sum_{j=1}^{N^{l-1}} W_{i, j}^l x_j + B_i^l.
\end{equation}
\end{definition}
We also recall the definition of \emph{characteristic function}. Let $\alpha$ be the element-wise ReLU activation function, i.e., $\alpha(x)_i = \max\{x_i,0\}$. 

\begin{definition} \label{def_nn_l}
\emph{(NN Characteristic Functions).}
Given input $x \in \mathbb{R}^{N^1}$, the characteristic function at layer $l$ of NN $\mathcal{N}$, $f_\mathcal{N}^l: \mathbb{R}^{N^1} \rightarrow \mathbb{R}^{N^l}$, is defined as
$$f_\mathcal{N}^l(x) := f_F^l \circ \alpha \circ f_F^{l-1} \circ \dots \circ f_F^3 \circ \alpha \circ f_F^2(x).$$
Moreover, we consider an NN classifier that includes an argmax layer at the end. 
The characteristic function of the NN is then defined as
$$f_{\mathcal{N}}(x) := \underset{i \in \{1, \dots, N^L\}}{\arg\max} \{ (f_\mathcal{N}^L(x))_i \},$$ where we assume that, when the $\arg\max$ is not a singleton, a deterministic tie-breaking procedure is used to select a single index.
\end{definition}

\subsection{Soft Decision Trees}

\begin{figure*}[t]
    \centering
    \begin{subfigure}{0.45\textwidth}
        \includegraphics[width=\columnwidth]{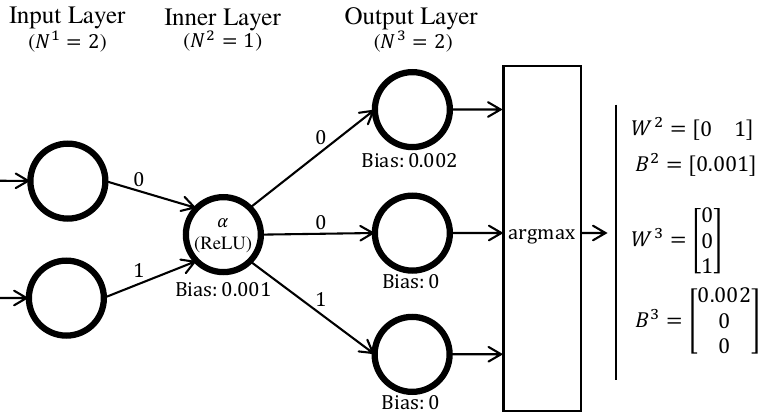}
    \caption{Reference NN.} \label{fig_ex_NN}
    \end{subfigure}
    \begin{subfigure}{0.5\textwidth}
        \includegraphics[width=\columnwidth]{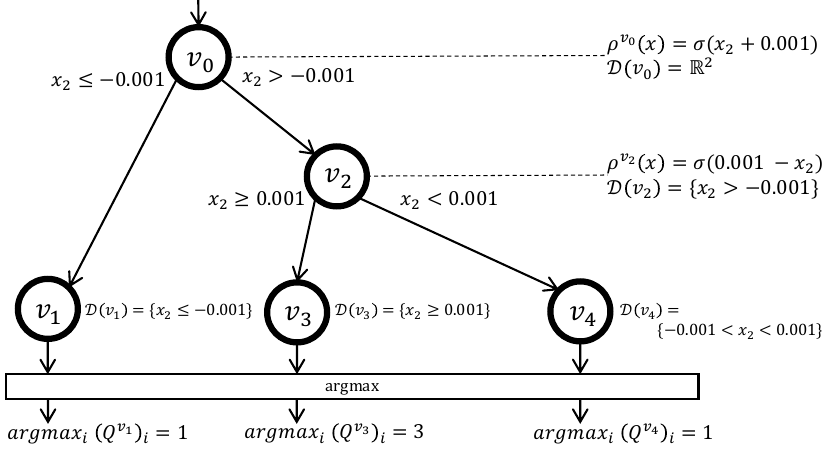}
    \caption{Equivalent SDT for the NN in Fig.~\ref{fig_ex_NN}.} \label{fig_ex_SDT}
    \end{subfigure}
    \caption{Example NN-to-SDT transformation ($\sigma(.)$ is the sigmoid function, $\{x_i < k\}$ is shorthand notation for $\{x_i: x_i < k\}$).}
    \label{fig_example}
\end{figure*}

While traditional decision trees arrive at output decisions based on the sequential comparison of individual features to threshold values, oblique decision trees~\cite{lee2019oblique}, multivariate decision trees~\cite{nguyen2020towards}, or SDTs \cite{irsoy2012soft} provide additional flexibility by enabling inner nodes to consider functions of multiple features.
Following on Frosst~\cite{frosst2017distilling}, we focus on SDTs that select paths from the root to the leaves based on binary decisions, rather than calculating the probability of selecting each branch and computing the distribution at each leaf based on the weighted sum of the branch distributions.  
Let $\mathcal{I}_{\mathcal{S}}$ and $\mathcal{L}_{\mathcal{S}}$ denote the sets of inner and leaf nodes for an SDT $\mathcal{S}$ with input dimension $n$.
Each inner node $v\in\mathcal{I}_{\mathcal{S}}$ is associated with weights $w^v\in\mathbb{R}^n$ and a bias term $b^v\in\mathbb{R}$.
At node $v$, $w^v$ and $b^v$ are applied to input $x$.
The resulting value is passed through an activation function $\sigma$, producing a scalar, $\rho^v(x) = \sigma(x^\top w^v + b^v)$, which is compared to a threshold to determine whether to proceed to the left or right branch.
The sigmoid function is utilized in training to mitigate hard decisions.
In this paper, while we focus on NN transformation rather than training, we adhere to the sigmoid function formulation for consistency with prior research.

Each leaf node $v\in\mathcal{L}_{\mathcal{S}}$ is associated with a vector $Q^v$.
In general, $Q^v$ is a distribution over possible output selections.
Rather than learning the parameters $(w^v,b^v)$ or $Q^v$ for each node $v$ from data as in the literature~\cite{frosst2017distilling}, we derive these parameters via transformation of a reference NN.
We denote the left and right child of an inner node $v$ as $l(v)$ and $r(v)$, respectively, and the parent of $v$ as $p(v)$.
The root node of the SDT is $v_0$.
We now define the SDT characteristic functions.

\begin{definition} \label{def_sdt_v} \emph{(SDT Characteristic Functions).}
Let $\mathcal{S}$ be an SDT with input dimension $n$.
Given input vectors $x \in \mathbb{R}^{n}$, the characteristic function of node $v$, $f_\mathcal{S}^v: \mathbb{R}^{n} \rightarrow \mathbb{N}$, is defined recursively as 
$$f_\mathcal{S}^v(x) = \left\{ \begin{array}{cl}
            \arg\max_k\{(Q^v)k\},  & \text{if } v\in\mathcal{L}_{\mathcal{S}},\\
        f_\mathcal{S}^{l(v)}(x), & \text{if } v\in\mathcal{I}_{\mathcal{S}} \wedge \rho^v(x) \leq 0.5, \\
        f_\mathcal{S}^{r(v)}(x), & \text{if } v\in\mathcal{I}_{\mathcal{S}} \wedge \rho^v(x) > 0.5.
\end{array} \right.$$
Finally, we define the SDT characteristic function as $$f_{\mathcal{S}}(x) := f_\mathcal{S}^{v_0}(x),$$ where $v_0$ denotes the root node.    
\end{definition}

The output of each leaf node of the SDT is then directly determined by a single path within the tree, guided by binary decisions~\cite{frosst2017distilling}. Throughout this work, we assume that NNs and SDTs use the same deterministic $\arg\max$ tie-breaking procedure. We further introduce the notation $\mathcal{D}(v)$ to represent the effective \emph{domain} of an SDT node $v$, i.e., the set of all possible   inputs arriving at $v$.
The set $\mathcal{D}(v)$ can be computed recursively in a top-down manner from the root node as follows
\begin{equation} \label{equ_rule}
    \mathcal{D}(v) = 
    \left\{ \begin{array}{ll}
        \mathbb{R}^{n}, & \text{if } v = v_0, \\
        \mathcal{D}(p(v)) \;\cap & \text{if } v \not = v_0 \;\wedge \\
        \quad \{x | \rho^{p(v)}(x) \leq 0.5\}, & \quad l(p(v)) = v, \\
        \mathcal{D}(p(v)) \;\cap & \text{if } v \not = v_0 \;\wedge \\
        \quad \{x | \rho^{p(v)}(x) > 0.5\}, & \quad r(p(v)) = v, \\
    \end{array} \right.
\end{equation}
where $n$ is the input dimension of the SDT.

For example, for the SDT in Fig.~\ref{fig_ex_SDT}, the inputs $x=(x_1,x_2)^\top \in \mathbb{R}^2$ are processed starting from the root node $v_0$, where $\mathcal{D}(v_0)=\mathbb{R}^2$.
The input range related to node $v_1$ is $\mathcal{D}(v_1) = \{x\in\mathbb{R}^2|x_2 \leq -0.001\}$, i.e., $v_1$ is reached only if $x_2 \leq -0.001$.
Similarly, node $v_2$ is reached when $x_2 > -0.001$, i.e., $\mathcal{D}(v_2) = \{x\in\mathbb{R}^2|x_2 > -0.001\}$.
In the remainder of the paper, we omit the reference to the underlying space in the expressions for the node input domains, when it is clear from the context, and simply write, e.g.,  $\mathcal{D}(v_1) = \{x_2\leq -0.001\}$ and $\mathcal{D}(v_2) = \{x_2>-0.001\}$. 

\begin{table*}
\caption{Rules for constructing an SDT node $v$.}
\label{tbl_transform}
\setlength\tabcolsep{1.5pt} 
\renewcommand{\arraystretch}{1.5} 
\begin{center} \resizebox{1.0\linewidth}{!}{
    \begin{tabular}{ l | l | l }
        \multicolumn{1}{c|}{\textbf{Rules}} & \multicolumn{1}{c|}{\textbf{Conditions}} & \multicolumn{1}{c}{\textbf{Actions}} \\
    \hline
    1: Split based on ReLU & (\hypertarget{eq_condition_1}{A})$\begin{array}{c} \exists \ i, l < L \ \text{s.t.} \ \mathcal{D}(v) \cap \{\g{i}{l}{v}(x) \leq 0\} \not = \emptyset \;\wedge \\ \mathcal{D}(v) \cap \{\g{i}{l}{v}(x) > 0\} \not = \emptyset \end{array}$ & $\rho^v(x) = \left\{ \begin{array}{l}
\sigma(\g{i}{l}{v}(x)) \\ \text{for} \ i, \min_{l \in \{1, \dots, L\}}\,l \ \text{such that (\hyperlink{eq_condition_1}{A}) holds} \\ \end{array} \right.$ \\ [8pt]
    \hline
    2: Split based on output layer & (\hypertarget{eq_condition_2}{B})$\begin{array}{c} \text{(\hyperlink{eq_condition_1}{A}) is not satisfied and} \ \exists \ i_1, i_2 \ \text{s.t.} \\ \mathcal{D}(v) \cap \bigcap_{i' \neq i_1} \{\g{i_1}{L}{v}(x) \geq \g{i'}{L}{v}(x)\} \not = \emptyset \;\wedge \\ \mathcal{D}(v) \cap \bigcap_{i' \neq i_2} \{\g{i_2}{L}{v}(x) > \g{i'}{L}{v}(x)\} \not = \emptyset \end{array}$ & $\rho^v(x) = \sigma(\g{i_1}{L}{v}(x) - \g{i_2}{L}{v}(x))$ \\ [15pt]
    \hline
    3: Form leaf node & (\hypertarget{eq_condition_3}{C}) \text{Neither (\hyperlink{eq_condition_1}{A}) nor (\hyperlink{eq_condition_2}{B}) are satisfied} & $(Q^v)_i = \left\{\begin{array}{ll}
            1 & \text{if } i=  \arg\max_k \{\g{k}{L}{v}(x)\} \ \forall x \in \mathcal{D}(v) \\
        0 & \text{otherwise} \\
        \end{array}\right.$ \\
    \end{tabular}
} \end{center}
\end{table*}

\section{Transformation from Neural Network to Decision Tree} \label{sec:transformation}

We present an algorithm for constructing an SDT $\mathcal{S}_{\mathcal{N}}$ which is equivalent to a reference NN $\mathcal{N}$ in the sense that $f_{\mathcal{N}}(x)=f_{\mathcal{S}_{\mathcal{N}}}(x)$ for all inputs $x\in\mathbb{R}^n$. We elucidate the connection between fully connected NNs with ReLU activation and SDTs via the example in Fig.~\ref{fig_example}, showing that there exists an SDT $\mathcal{S}_\mathcal{N}$ equivalent to a such reference NN $\mathcal{N}$.

\begin{example} \label{example_simple_nn}
Given the NN $\mathcal{N}$ and SDT $\mathcal{S}_\mathcal{N}$ in Fig.~\ref{fig_example}, we demonstrate their equivalence for all inputs $x \in \mathbb{R}^n$, specifically, $f_{\mathcal{N}}(x)=f_{\mathcal{S}_{\mathcal{N}}}(x)$. In the NN of Fig.~\ref{fig_ex_NN}, the output of the neuron in layer 2 following the ReLU activation is 
\begin{equation}\label{eq:ex_relu_split}
\begin{aligned}
& \alpha(f^2_{\mathcal{N}}(x)) = \alpha\left(W^2x + B^2\right) \\
& = \alpha(x_2+0.001) = \begin{cases}
    0, &\text{if }x_2\leq -0.001,\\
    x_2+0.001, &\text{if }x_2>-0.001.
\end{cases}
\end{aligned}
\end{equation}
Carrying these two cases forward through the output layer $L=3$, we have 
\begin{equation*}
\begin{aligned}
& f^3_{\mathcal{N}}(x) = W^3 \alpha(f^2_\mathcal{N}(x)) + B^3 \\
& = \begin{cases} (0.002,\,0,\,0)^{\top},&\text{if }x_2\leq -0.001,\\
(0.002,\,0,\,x_2+0.001)^{\top},&\text{if }x_2>-0.001.
\end{cases}
\end{aligned}
\end{equation*}
Thus, when $x_2\leq -0.001$, we have \begin{equation*}
f_{\mathcal{N}}(x) = \underset{i \in \{1, 2, 3\}}{\arg\max} \{(f_\mathcal{N}^3(x))_i\} = 1,
\end{equation*}
while when $x_2>-0.001$, we have 
\begin{equation}\label{eq:ex_output_split}
f_{\mathcal{N}}(x) = \begin{cases}
1,&\text{if }x_2\leq 0.001,\\
3,&\text{if }x_2> 0.001,
\end{cases}
\end{equation}
assuming we take the lowest index to break ties.

Turning to the SDT in Fig.~\ref{fig_ex_SDT}, we observe that \eqref{eq:ex_relu_split} partitions the input space into two subsets based on the ReLU activation function of the inner neuron.
This is precisely the split that occurs at the root node $v_0$ of the SDT. Equation~\eqref{eq:ex_output_split} further partitions one of these subsets based on the output layer values. This split occurs at $v_2$, forming the domains of leaf nodes $v_3$ and $v_4$. Finally, we obtain three regions, in which $f_{\mathcal{N}}(x)$ is constant, corresponding to leaf nodes $v_1$, $v_3$, and $v_4$ in Fig.~\ref{fig_ex_SDT}. Overall, we have 
\begin{equation*}
f_{\mathcal{S}_\mathcal{N}}(x) = \begin{cases}
1,&\text{if }x_2\leq 0.001,\\
3,&\text{if }x_2> 0.001.
\end{cases}
\end{equation*}
\end{example}

As shown in \eqref{eq:ex_relu_split}-\eqref{eq:ex_output_split} of Example~\ref{example_simple_nn}, for subsets of the NN input space corresponding to the two possible regimes of the ReLU activation in layer two, $f^L_{\mathcal{N}}(x)$ is an affine function. Such subsets may be further partitioned based on the argmax operation, as in \eqref{eq:ex_output_split}. The operation of fully connected NNs with ReLU activation and argmax output can then be understood in terms of a successive assignment of the inputs to increasingly refined space subsets. At each layer, these subsets can be shown to be convex polyhedra~\cite{lee2019oblique}. The identification of SDT splits and leaf node assignments based on NN neuron activation functions and output values forms the core of our transformation technique, which we further explain below. 

\subsection{Pre- and Post-Activation Functions}

We establish the relationship between $\mathcal{N}$ and $\mathcal{S}_{\mathcal{N}}$ by first introducing, for each node $v$ of $\mathcal{S}_{\mathcal{N}}$, \textit{pre-activation functions} of the form $\g{i}{l}{v}\,:\,\mathcal{D}(v)\to \mathbb{R}$ and \textit{post-activation functions} of the form $\overline{\g{i}{l}{v}}\,:\,\mathcal{D}(v)\to \mathbb{R}$.
A pre-activation function provides the output of the $i$-th neuron in the $l$-th layer of $\mathcal{N}$ as a function of the input in $\mathcal{D}(v)$ prior to the ReLU. A post-activation function gives the neuron output after the ReLU, i.e., $\overline{\g{i}{l}{v}} = \alpha(\g{i}{l}{v})$.
The pre-activation functions for $v$ are recursively defined for $i=1,\dots,N^l$ and $l=2,\dots,L$ as
\begin{equation} \label{equ_pre}
\resizebox{\ifdim\width>0.9\linewidth 0.9\linewidth \else \width \fi}{!}{$
\begin{aligned}
& \g{i}{l}{v}(x) = \\
& \begin{cases}
    \sum_{j=1}^{N^1} W^2_{i,j} x_j + B^2_i, & \text{if } l = 2, \\
    \sum_{j=1}^{N^{l-1}}W^l_{i,j} \overline{\g{j}{l-1}{v}}(x) + B_i^l,  & \text{if } l > 2 \wedge \overline{\g{j}{l-1}{v}}(x) \neq \bot \\
    \quad & \quad \text{for } j =1, 2, \dots, N^{l-1}, \\
    \bot, & \text{otherwise}.
\end{cases}
\end{aligned} 
$}
\end{equation}
If the output of the $i$-th neuron in the $l$-th layer is not an affine function over the domain $\mathcal{D}(v)$ and cannot be represented by an appropriate pre-activation function $\g{i}{l}{v}(x)$, we set $\g{i}{l}{v}(x) = \bot$, where $\bot$ denotes `undefined.' 

The post-activation functions for node $v$ are recursively defined for $i=1,\dots,N^l$ and $l=2,\dots,L-1$ as
\begin{equation} \label{equ_post}
\begin{split}
\resizebox{\ifdim\width>0.9\linewidth 0.9\linewidth \else \width \fi}{!}{$
\overline{\g{i}{l}{v}}(x) = \begin{cases}
    \g{i}{l}{v}(x), & \text{if } \g{i}{l}{v} \not = \bot \;\wedge\\
                   & \quad \mathcal{D}(v) \cap \{\g{i}{l}{v}(x) \leq 0\} = \emptyset, \\
    0, & \text{if } \g{i}{l}{v} \not = \bot \;\wedge \\
      & \quad \mathcal{D}(v) \cap \{\g{i}{l}{v}(x) > 0 \} = \emptyset, \\
    \bot, & \text{otherwise}.
\end{cases}
$}
\end{split}
\end{equation}
Finally, we define $\g{}{l}{v} := (\g{1}{l}{v},\dots,\g{N^l}{l}{v})^{\top}$ and $\overline{\g{}{l}{v}}$ similarly, for $l=2,\dots, L$.
The pre-activation functions play a crucial role in determining the pre-threshold branching functions $\rho^v$ at each inner node $v$ of $\mathcal{S}_{\mathcal{N}}$.

\subsection{SDT Split and Leaf Formation Rules}

We construct the branches of $\mathcal{S}_{\mathcal{N}}$ based on the rules in Table \ref{tbl_transform}. Beginning with the root node $v_0$, for each node $v$ in $\mathcal{S}_{\mathcal{N}}$, we sequentially apply the rules in Table~\ref{tbl_transform} to obtain $\rho^v(x)$ or $Q^v$ in a top-down manner, from root to leaves. Specifically, for each node, we first check condition (\hyperlink{eq_condition_1}{A}); if condition (\hyperlink{eq_condition_1}{A}) is not applicable, we then proceed to  (\hyperlink{eq_condition_2}{B}), and subsequently to (\hyperlink{eq_condition_3}{C}). The details of these rules are explained below.

\subsubsection{Rule 1 (Split Based on ReLU)}

If (\hyperlink{eq_condition_1}{A}) holds for a neuron $i$ in layer $l$ of $\mathcal{N}$, then by \eqref{equ_post}, the post-activation function $\overline{\g{i}{l}{v}}$ is undefined. Therefore, we introduce a split at node $v$ in $\mathcal{S}_{\mathcal{N}}$ so that the post-activation function $\overline{\g{i}{l}{v}}$ associated with both the children $l(v)$ and $r(v)$ is defined and affine. Specifically, $\mathcal{D}(v)$ is partitioned by the hyperplane $\g{i}{l}{v}(x)=0$, resulting in $\mathcal{D}(l(v)) = \mathcal{D}(v)\cap\{\g{i}{l}{v}(x)\leq0\}$ and $\mathcal{D}(r(v))=\mathcal{D}(v)\cap\{\g{i}{l}{v}(x)>0\}$.
For example, for node $v_0$ in Fig.~\ref{fig_ex_SDT}, we create a split at $v_0$ with $\rho^{v_0}(x)=\sigma(\g{1}{2}{v_0}(x))$.

\subsubsection{Rule 2 (Split Based on Output Layer)} 

Suppose no partitions can be found for $\mathcal{D}(v)$ 
according to (\hyperlink{eq_condition_1}{A}). If, however, (\hyperlink{eq_condition_2}{B}) holds for neurons $i_1$ and $i_2$ in layer $L$ of $\mathcal{N}$, then $\arg\max_k \{\g{k}{L}{v}\}(x)$ is not a singleton in $\mathcal{D}(v)$. In this case, $\mathcal{D}(v)$ should be further partitioned based on the output layer values.
Consequently, we form a split at node $v$ where $\mathcal{D}(v)$ is partitioned by the hyperplane $\g{i_2}{L}{v}(x) - \g{i_1}{L}{v}(x) = 0$.
For example, for node $v_2$ in Fig.~\ref{fig_ex_SDT}, a split is created at $v_2$ with $\rho^{v_2}(x)$ defined as $\sigma(\g{1}{3}{v_2}(x) - \g{3}{3}{v_2}(x))$.

\subsubsection{Rule3 (Form Leaf Node)}

If neither (\hyperlink{eq_condition_1}{A}) nor (\hyperlink{eq_condition_2}{B}) holds, 
then the index of the neuron with the maximum output at layer $L$, i.e., the outcome of the $\arg\max$ operator, remains constant over the entire set $\mathcal{D}(v)$. We then declare $v$ as a leaf node, setting $(Q^v)_i=1$ if the neuron index $i$ corresponds to the largest output, and $0$ otherwise. For example, for node $v_1$ in Fig.~\ref{fig_ex_SDT}, we set $Q_1^{v_1}(x)=1$ and $Q_2^{v_1}(x)=Q_3^{v_1}(x)=0$.

The conditions above, e.g., $\mathcal{D}(v) \cap \{\alpha(\g{i}{l}{v}(x)) = 0\}\neq \emptyset$ in (\hyperlink{eq_condition_1}{A}), can be formulated and efficiently checked as feasibility problems for linear programs.

\begin{figure}[t]
    \centering
    \includegraphics[width=\ifdim\columnwidth>0.6\textwidth 0.7\columnwidth \else 1.0\columnwidth \fi]{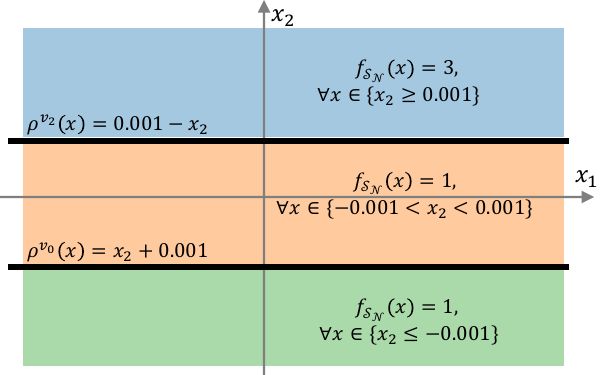}
    \caption{Characteristic function of the transformed SDT, $f_{\mathcal{S}_\mathcal{N}}(x)$, for Example~\ref{example_simple_nn}.}
    \label{fig_procedure}
\end{figure}

\subsection{Transformation Example} \label{sec:transformation_example}

We introduce our transformation procedure using the example in Fig.~\ref{fig_example}.
Assuming that the neural network in Fig.~\ref{fig_ex_NN} takes as input $x = (x_1, x_2)^{\top}\in\mathbb{R}^2$, for node $v_0$, we set $\mathcal{D}(v_0) = \mathbb{R}^2$. Using \eqref{equ_pre}, we can compute $\g{1}{2}{v_0}(x)$ as $x_2$.
Because neither $\mathbb{R}^2 \cap \{x_2 \leq -0.001\}$ nor $\mathbb{R}^2 \cap \{x_2 > -0.001\}$ is an empty set, we create a split at $v_0$ by applying (\hyperlink{eq_condition_1}{A}) from Table~\ref{tbl_transform}. This results in two children, $l(v_0)=v_1$ and $r(v_0)=v_2$, and a splitting function $\rho^{v_0}(x)=\sigma(\g{1}{2}{v_0}(x)) = x_2 + 0.001$, as pictorially represented in Fig.~\ref{fig_procedure}.

For the tree node $v_1$, we have $\mathcal{D}(v_1) = \mathbb{R}^2\cap \{x_2\leq-0.001\}$ and $\g{1}{2}{v_1}(x) = x_2+0.001$. Thus, we obtain $\overline{\g{1}{2}{v_1}}(x)=0$ by $\eqref{equ_post}$, and $\g{i}{3}{v_1}(x) = B^3_i$, that is, the output of the inner ReLU for any $x\in\mathcal{D}(v_1)$ is zero and the input to the final $\arg\max$ layer is simply the bias vector $\g{}{3}{v_1}(x) = B^3 = (0.002,0,0)^{\top}$. (\hyperlink{eq_condition_1}{A}) does not apply because $\mathcal{D}(v_1)\cap \{\g{1}{2}{v_1}(x)>0\}= \emptyset$, nor does (\hyperlink{eq_condition_2}{B}), because $\g{1}{3}{v_1}(x)>\g{i}{3}{v_1}(x)$ for any $i\in\{2,3\}$ and $x\in\mathcal{D}(v_1)$. Therefore, (\hyperlink{eq_condition_3}{C}) is applied with $Q_1^{v_1}(x)=1$ and $Q_2^{v_1}(x)=Q_3^{v_1}(x)=0$. As shown in Fig.~\ref{fig_procedure}, we have $f_{\mathcal{S}_\mathcal{N}}(x) = 1, \forall x \in \mathcal{D}(v_1) = \{x_2\leq-0.001\}$.

For node $v_2$, with domain $\mathcal{D}(v_2) = \mathbb{R}^2\cap\{x_2>-0.001\}$, using \eqref{equ_pre} and \eqref{equ_post}, we have $\g{1}{2}{v_2}(x) = \overline{\g{1}{2}{v_2}}(x) = x_2+0.001$, and $\g{}{3}{v_2}(x) = (0.002, 0, x_2+0.001)^\top$. (\hyperlink{eq_condition_1}{A}) cannot be satisfied since $\mathcal{D}(v_2) \cap \{\g{1}{2}{v_2}(x) \leq  0 \}$ is empty.
However, both 
$$\begin{aligned}
& \mathcal{D}(v_2) \cap \bigcap_{i' \neq 1}\{\g{1}{3}{v_2}(x) \geq \g{i'}{3}{v_2}(x)\} \\
& = \{-0.001 < x_2 \leq  0.001\}
\end{aligned}$$
and
$$\mathcal{D}(v_2) \cap \bigcap_{i' \neq 3}\{\g{3}{3}{v_2}(x) > \g{i'}{3}{v_2}(x)\} = \{0.001 < x_2 \}$$ are non-empty. Therefore, we use (\hyperlink{eq_condition_2}{B}) to set $\rho^{v_2}(x)$ as $\sigma(\g{1}{3}{v_2}(x) - \g{3}{3}{v_2}(x)) = 0.001 - x_2$, as shown in Fig.~\ref{fig_procedure}. We repeat this process for nodes $v_3$ and $v_4$ and transform the NN in Fig.~\ref{fig_ex_NN} to the SDT in Fig.~\ref{fig_ex_SDT}.

We illustrate the potential advantage of using the SDT controller over the NN controller in a verification problem by comparing the encoding of the characteristic functions of a NN $\mathcal{N}$ and its equivalent SDT $\mathcal{S}_{\mathcal{N}}$ using the example in Fig.~\ref{fig_example}. 
The encoding of $u = f_\mathcal{N}(x)$ includes constraints derived from both the encoding of $z = f_\mathcal{N}^2(x) = \alpha(x_2+0.001)$ and $u = \arg\max (0.002, 0, z)^T$. These are expressed as 
\begin{equation} \nonumber
\begin{array}{ll}
& \underbrace{(z \geq x_2+0.001) \wedge (z \geq 0) \wedge [(z = 0) \vee (z = x_2+0.001)]}_{z = \alpha(x_2+0.001)} \\
&\qquad\wedge \underbrace{[(u = 3) \vee (z \leq 0.002)] \wedge [(u = 1) \vee (z > 0.002)]}_{u = \arg\max (0.002, 0, z)^T},
\end{array}
\end{equation}
where $z$ is an auxiliary variable associated with the inner neuron. On the other hand, the encoding of $u = f_{\mathcal{S}_\mathcal{N}}(x) = \arg\max (0.001, 0, x_2)^T$ is given by
\begin{equation} \nonumber
[(u = 3) \vee (x_2 \leq 0.001)] \wedge [(u = 1) \vee (x_2 > 0.001)]. 
\end{equation}
We observe that the encoding of $f_{\mathcal{N}}(x)$ tends to produce longer formulas including a number of auxiliary variables that increases with the number of layers. Conversely, in SDTs, each node $v$ produces a scalar function $\rho^v(x)$ and the relation between the input $x$ and the output $f_{\mathcal{S}_\mathcal{N}}(x)$ is more straightforward, leading to a comparatively smaller expression, which helps reduce the verification time.

\subsection{Transformation Procedure}

We construct $\mathcal{S}_{\mathcal{N}}$ by starting with the root node $v_0$ and building the binary tree downwards.
Initially, we create the left branches, progressing until a leaf node is encountered.
Subsequently, we backtrack up the tree, defining the unexplored right branches to complete the construction of $\mathcal{S}_{\mathcal{N}}$.
A recursive method implementing this procedure is presented in Algorithm~\ref{alg_transformation}.
The transformation procedure effectively identifies a partition of the input space according to the domains associated with the SDT leaf nodes. Within these sets, the neural network characteristic function $f_{\mathcal{N}}$ is constant, and due to our choice of $Q^v$ at each leaf node $v$, we have that $f_{\mathcal{N}}(x) = f_{\mathcal{S}_{\mathcal{N}}}(x)$ over $\mathcal{D}(v)$.

\begin{algorithm}[t]
\caption{NN-to-SDT Transformation $\emph{NN2SDT}(\cdot)$}
\label{alg_transformation}
{\small
\begin{algorithmic}[1]
    \State \textbf{Global Variables: } NN parameters $L$, $\{N^l\}_{l=1}^L$, $\{W^l,b^l\}_{l=2}^L$
    \State \textbf{Input: } $v$    
    \State Compute $\mathcal{D}(v)$ via \eqref{equ_rule}
    \For{$l=2,\dots,L$}
    \For{$i=1,\dots,N^l$}
    \State Compute $\g{i}{l}{v}$, $\overline{\g{i}{l}{v}}$ via \eqref{equ_pre}, \eqref{equ_post}
        \EndFor
    \EndFor
    \If{ 
    ($\mathcal{D}(v),\g{i}{l}{v}$) satisfy (\hyperlink{eq_condition_1}{A})}
    \State $\rho^v(x) = \sigma(\g{i}{l}{v}(x))$, create $l(v)$, $r(v)$
    \State $\emph{NN2SDT}(l(v))$ \algorithmiccomment{{Recursion}}
    \State $\emph{NN2SDT}(r(v))$
    \ElsIf{
    ($\mathcal{D}(v),\g{i_1}{L}{v},\g{i_2}{L}{v}$) satisfy (\hyperlink{eq_condition_2}{B})}
    \State $\rho^v(x) = \sigma(\g{i_1}{L}{v}(x)-\g{i_2}{L}{v}(x))$, create $l(v)$, $r(v)$.
    \State $\emph{NN2SDT}(l(v))$ \algorithmiccomment{{Recursion}}
    \State $\emph{NN2SDT}(r(v))$
    \Else
        \State $(Q^v)_i =$  $\left\{\begin{array}{ll}
            1 & \text{if } i = \arg\max_k \{\g{k}{L}{v}\} \\
            0 & \text{otherwise} \\
        \end{array}\right.$
    \EndIf
\end{algorithmic}
}
\end{algorithm}

Prior to introducing our main equivalence result, we present two lemmas.
As \eqref{equ_pre} and \eqref{equ_post} show, the SDT pre- and post-activation functions, respectively, can, in general, become undefined. Therefore, Lemma~\ref{lemma_rule1} states that SDTs constructed via Algorithm \ref{alg_transformation} are always well-defined. 
\begin{lemma}\label{lemma_rule1}
For all inner nodes $v$ of $\mathcal{S}_{\mathcal{N}}$ the assignment $\rho^v = \sigma(\bot)$ can never occur. 
\end{lemma}
\begin{proof}
The assignment $\rho^v = \sigma(\bot)$ can only occur for a node $v$ satisfying (\hyperlink{eq_condition_1}{A}) or (\hyperlink{eq_condition_2}{B}). If (\hyperlink{eq_condition_1}{A}) applies, let $\rho^v = \sigma(\g{i}{l}{v})$ with $\g{i}{l}{v}=\bot$.
Then, due to (\ref{equ_pre}), there must also exist $l' < l$ and $i'$ such that $\g{i'}{l'}{v}$ is defined but $\overline{\g{i'}{l'}{v}}$ is not.
By (\ref{equ_post}), if $\overline{\g{i'}{l'}{v}}=\bot$, then both $\mathcal{D}(v) \cap \{\g{i'}{l'}{v} (x) \leq 0\}$ and $\mathcal{D}(v) \cap \{\g{i'}{l'}{v} (x) > 0\}$ are non-empty. In this case, (\hyperlink{eq_condition_1}{A}) would assign $\rho^v=\sigma(\g{i'}{l'}{v})$, which contradicts our assumption that $\rho^v = \sigma(\g{i}{l}{v})$.

If (\hyperlink{eq_condition_2}{B}) applies, then $\rho^v = \sigma(\g{i_1}{L}{v}-\g{i_2}{L}{v})=\sigma(\bot)$, and for some $i\in\{i_1,i_2\}$, we have 
$\g{i}{L}{v}=\bot$. We can then take $l=L$ and repeat the argument as for Rule 1 to arrive at a contradiction with the assignment $\rho^v=\sigma(\g{i_1}{L}{v}-\g{i_2}{L}{v})$.
\end{proof}

We now introduce Lemma~\ref{lemma_rule3}, which directly relates the pre-activation function $\g{}{L}{v}(x)$ for layer $L$ to the characteristic function $f^L_{\mathcal{N}}(x)$. 

\begin{lemma}\label{lemma_rule3}
Given a NN $\mathcal{N}$ and the SDT $\mathcal{S}_{\mathcal{N}}$ obtained from Algorithm~\ref{alg_transformation}, let $\mathcal{L}$ denote the set of leaf nodes of $\mathcal{S}_{\mathcal{N}}$. Then, for all $v\in\mathcal{L}$ and $x\in\mathcal{D}(v)$, we have 
$$\g{}{L}{v}(x)=f^L_{\mathcal{N}}(x).$$
\end{lemma}

\begin{proof} 
We prove this result via induction on $l=2,\dots,L$.
For $l=2$, for each node $v$ and $x\in\mathcal{D}(v)$, \eqref{equ_pre} gives $$\g{}{2}{v}(x)= W^2 x + B^2 = f_\mathcal{N}^2(x) .$$ 
For the induction step, assume that $(f_\mathcal{N}^l(x))_i = \g{i}{l}{v}(x)$ for all $i \in \{1, 2, \dots, N^l\}$. Then, by Definition \ref{def_ff}, for $i\leq N^{l+1}$ we have, for all $x\in\mathcal{D}(v)$, 
\begin{equation}\label{eq:lem3_NN_FF}(f_\mathcal{N}^{l+1}(x))_i = \sum_{j = 0}^{N^l} (W^{l+1}_{i,j} \alpha(\g{j}{l}{v}(x))) + B_i^{l+1}.\end{equation}
By Lemma~\ref{lemma_rule1}, we consider the possibilities corresponding to the first two cases in \eqref{equ_post}. First, if $\mathcal{D}(v) \cap \{\g{j}{l}{v}(x) \leq 0\}$ is empty, then $\g{j}{l}{v}(x) > 0$ for all $x \in \mathcal{D}(v)$, which implies 
\begin{equation}\label{eq:lem3_relu1}
    \alpha(\g{j}{l}{v}(x)) = \g{j}{l}{v}(x) = \overline{\g{j}{l}{v}}(x).
\end{equation}
Second, if $\mathcal{D}(v) \cap \{\g{j}{l}{v}(x) > 0\}$ is empty, then for all $x \in \mathcal{D}(v)$, we have $\g{j}{l}{v}(x) \leq 0$, which implies \begin{equation}\label{eq:lem3_relu2}
    \alpha(\g{j}{l}{v}(x)) = 0 = \overline{\g{j}{l}{v}}(x).
    \end{equation}
Therefore, combining \eqref{equ_pre} and \eqref{eq:lem3_NN_FF}-\eqref{eq:lem3_relu2}, we have 
$$(f^{l+1}_{\mathcal{N}}(x))_i = \sum_{j = 0}^{N^l} (W^{l+1}_{i,j} \overline{\g{j}{l}{v}(x)}) + B_i^{l+1}= \g{i}{l+1}{v}(x),$$
which concludes our proof. 
\end{proof}

We are now ready to state our main result on the equivalence of $\mathcal{N}$ and $\mathcal{S}_{\mathcal{N}}$.

\begin{theorem}\label{theorem}
Let $\mathcal{N}$ be a NN with input space $\mathbb{R}^n$ and $\mathcal{S}_{\mathcal{N}}$ the SDT resulting from the application of Algorithm~\ref{alg_transformation} to $\mathcal{N}$. 
Then, the corresponding characteristic functions are pointwise equal, i.e., 
$f_{\mathcal{N}}(x) = f_{\mathcal{S}_{\mathcal{N}}}(x)$ for all $x\in\mathbb{R}^n.$
\end{theorem}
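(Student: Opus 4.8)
The plan is to reduce the global identity to a per-leaf statement. First I would show that the tree routes every $x\in\mathbb{R}^n$ to a unique leaf whose domain contains it. Since $\mathcal{D}(v_0)=\mathbb{R}^n$, and at each inner node the sigmoid is monotone so that $p_v(x)\le 0.5$ precisely captures the sign of its argument, the half-space selected by the threshold $0.5$ coincides with the half-space recorded for the corresponding child in \eqref{equ_rule}. Hence from any $x\in\mathcal{D}(v)$ the node sends $x$ to exactly the child whose domain contains it, and following the path from the root yields a single leaf $v^\ast$ with $x\in\mathcal{D}(v^\ast)$; the leaf domains therefore cover $\mathbb{R}^n$. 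By Definition~\ref{def_sdt} and Definition~\ref{def_sdt_v} it then suffices to prove $f_{\mathcal{N}}(x)=\arg\max_k\{Q^{v^\ast}_k\}$ for every $x\in\mathcal{D}(v^\ast)$.

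The core of the argument is the claim that at any leaf $v$ the pre-activation formulas coincide with the network, namely $\F{i}{l}{v}(x)=(f^l_{\mathcal{N}}(x))_i$ for all $x\in\mathcal{D}(v)$, which I would establish by induction on $l$, proving simultaneously that the formulas are defined (never $\bot$). The base case $l=2$ follows directly from the first branch of \eqref{equ_pre} and Definition~\ref{def_ff}. Crucially, because $v$ is a leaf, Rule~1 of Table~\ref{tbl_transform} fails at $v$, so for every hidden neuron $(i,l)$ with $l<L$ the sign of $\F{i}{l}{v}$ is constant on $\mathcal{D}(v)$; consequently exactly one of the first two cases of \eqref{equ_post} applies, the post-activation $\overline{\F{i}{l}{v}}$ is defined, and it equals $\alpha(\F{i}{l}{v}(x))$ throughout $\mathcal{D}(v)$. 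Feeding this together with the induction hypothesis $\overline{\F{j}{l-1}{v}}(x)=\alpha((f^{l-1}_{\mathcal{N}}(x))_j)$ into the second branch of \eqref{equ_pre} gives $\F{i}{l}{v}(x)=\sum_{j}W^l_{i,j}\,\alpha((f^{l-1}_{\mathcal{N}}(x))_j)+B^l_i=(f^l_{\mathcal{N}}(x))_i$, the last step being the recursion of Definition~\ref{def_nn_l}. Taking $l=L$ yields $\F{}{L}{v}(x)=f^L_{\mathcal{N}}(x)$ on $\mathcal{D}(v)$.

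With this lemma the conclusion is short. Since $v^\ast$ is a leaf, Rule~2 also fails, so $\arg\max_k\{(\F{}{L}{v^\ast}(x))_k\}$ is a single index, constant over $\mathcal{D}(v^\ast)$, and Rule~3 sets $Q^{v^\ast}$ to its indicator. Therefore, for $x\in\mathcal{D}(v^\ast)$, $\arg\max_k\{Q^{v^\ast}_k\}=\arg\max_k\{(\F{}{L}{v^\ast}(x))_k\}=\arg\max_k\{(f^L_{\mathcal{N}}(x))_k\}=f_{\mathcal{N}}(x)$, where the middle equality is the lemma and the outer ones use the common deterministic tie-breaking rule from Definition~\ref{def_nn}. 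Since $x$ was arbitrary and the leaf domains cover $\mathbb{R}^n$, this proves $f_{\mathcal{S}_{\mathcal{N}}}(x)=f_{\mathcal{N}}(x)$ everywhere.

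I expect the main difficulty to lie in the bookkeeping of the undefined value $\bot$ and in making the induction non-vacuous. The pre-activation at layer $l$ in \eqref{equ_pre} is defined only when every layer-$(l-1)$ post-activation is defined, and the latter is defined only when the corresponding ReLU keeps a fixed sign on $\mathcal{D}(v)$. The structural fact that must be pinned down is therefore that a node becomes a leaf exactly when no hidden ReLU is ambiguous on its domain (failure of Rule~1), so that the entire chain of formulas up to layer $L$ is simultaneously well-defined at a leaf; this is what licenses the inductive step. A secondary point is termination of the recursion, i.e., that finitely many leaves are produced, which guarantees that the leaf domains form a genuine finite partition and justifies the covering argument used in the reduction.
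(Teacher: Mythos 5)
Your proposal is correct and takes essentially the same route as the paper: your core claim that $\F{}{L}{v}(x)=f^L_{\mathcal{N}}(x)$ on $\mathcal{D}(v)$ at every leaf $v$, proved by induction on $l$ using the two sign-constant cases of \eqref{equ_post}, is exactly the paper's Lemma~\ref{lemma_rule3}, and your conclusion via the leaf-domain partition, failure of Rule~2, the Rule~3 assignment of $Q^v$, and the shared deterministic argmax tie-breaking matches the paper's final step. The only difference is bookkeeping of $\bot$: you fold definedness into the leaf induction via failure of Rule~1, whereas the paper isolates undefinedness in a separate Lemma~\ref{lemma_rule1} on inner nodes using the minimal-$(i,l)$ selection in Rule~1 --- a cosmetic rather than substantive divergence.
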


\begin{proof}
By construction, the leaf node regions $\mathcal{D}(v)$ for $v\in\mathcal{L}$ form a partition of the input space $\mathbb{R}^n$. Therefore, using Lemma~\ref{lemma_rule3}, we may write 
\begin{equation}\label{eq:f_N_part}
f^L_{\mathcal{N}}(x) = \sum_{v\in\mathcal{L}}\mathds{1}_{\{x\in\mathcal{D}(v)\}}\g{}{L}{v}(x) = \g{}{L}{v_x}(x),
\end{equation}
where $v_x$ is the leaf node to which $x$ is assigned, i.e., such that $x\in\mathcal{D}(v_x)$. Taking $\arg\max$ over the left and right of \eqref{eq:f_N_part} and using the same tie-breaking procedure for both $\mathcal{N}$ and $\mathcal{S}_{\mathcal{N}}$, we have
\begin{equation} \nonumber
\begin{split}
& f_{\mathcal{N}}(x) = \underset{i \in \{1, \dots, N^L\}}{\arg\max}\{(f^L_{\mathcal{N}}(x))_i\} = \underset{i \in \{1, \dots, N^L\}}{\arg\max}\{\g{i}{L}{v_x}(x) \} \\
& \quad = f_{\mathcal{S}_{\mathcal{N}}}(x).
\end{split}
\end{equation}
\end{proof}

\subsection{Transformation Complexity}

A key advantage of the SDT transformation procedure given in Algorithm~\ref{alg_transformation} over existing approaches~\cite{nguyen2020towards, lee2019oblique, sudjianto2020unwrapping} is that our algorithm only generates essential node splits during the creation of the SDT.
As a result, the output SDT size scales polynomially in the maximum NN hidden layer width, as opposed to showing exponential scaling in the number of neurons. The complexity of $\mathcal{S}_{\mathcal{N}}$ in terms of the number of nodes is stated in Theorem~\ref{theorem_compactness}, whose proof is based on an upper bound on the number of piecewise affine regions achievable with ReLU NNs~\cite{ferlez2021bounding}.

We first define $B(x,\delta)$ as the Euclidean ball of radius $\delta$ centered at $x\in\mathbb{R}^n$, where $\delta>0$. We then recall the definitions of a hyperplane arrangement, a region of a hyperplane arrangement, and a bound on the number of regions of a hyperplane arrangement \cite{ferlez2021bounding}.

\begin{definition} \emph{(Hyperplane Arrangement).}
Let $\mathcal{A}$ be a set of affine functions, where for each $f\in\mathcal{A}$ we have $f\,:\,\mathbb{R}^n\to\mathbb{R}$. Then, $\mathcal{H}_\mathcal{A}=\bigcup_{f\in\mathcal{A}}\{x| f(x)=0\}$ is an arrangement of hyperplanes of dimension $n$.
\end{definition}

\begin{definition} \label{def_hyperplane}
\emph{(Region of a Hyperplane Arrangement).} Let $\mathcal{H}_\mathcal{A}$ be an arrangement of $N$ hyperplanes in the $n$-dimensional space, defined by a set of affine functions $\mathcal{A}$.
A non-empty open subset $R_\mathcal{A} \subseteq \mathbb{R}^n$ is called an $n$-dimensional region of $\mathcal{H}_\mathcal{A}$ if $R_\mathcal{A} = \bigcap_{f \in \mathcal{A}} \{x|f(x) \bowtie 0\}$, where $\bowtie$ is either $<$ or $>$ and there exists $\delta>0$ such that $B(x,\delta)\subseteq R_\mathcal{A}$.
The set of all regions of $\mathcal{A}$ is denoted as $\mathfrak{R}_{\mathcal{H}_\mathcal{A}}$.
\end{definition}

\setcounter{theorem}{2}
\begin{theorem} \label{theorem_hyperplane} \emph{    (Theorem 1 \cite{ferlez2021bounding}).} Let $\mathcal{H}_\mathcal{A}$ be an arrangement of $N$ hyperplanes of dimension $n$, defined by a set of affine functions $\mathcal{A}$.
The number of regions of $\mathcal{H}_\mathcal{A}$, denoted by $|\mathfrak{R}_{\mathcal{H}_\mathcal{A}}|$, is at most $\sum_{k=0}^n \binom{N}{k}$, which is bounded by $O(N^n/n!)$.
\end{theorem}

We are now ready to introduce the theorem on the complexity of the transformed SDT.

\begin{theorem} \label{theorem_compactness}
Let $\mathcal{N}$ be a NN and $\mathcal{S}_{\mathcal{N}}$ the SDT resulting from the application of Algorithm~\ref{alg_transformation} to $\mathcal{N}$. Denote the number of nodes in $\mathcal{S}_{\mathcal{N}}$ by $|\mathcal{S}_{\mathcal{N}}|$. Then, we obtain 
$$|\mathcal{S}_{\mathcal{N}}| = O\left(\prod_{l=2}^{L-1} \frac{(N^l)^{(N^1)}}{(N^1)!} 2^{N^L}\right) = O\left(\frac{2^{N^L}}{(N^1!)^{L}}\overline{l}^{N^1L}\right),$$
where $\overline{l} := \max_{l\in\{2,\dots,L-1\}}N^l$.
\end{theorem}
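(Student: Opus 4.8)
The plan is to bound the number of leaf nodes of $\mathcal{S}_{\mathcal{N}}$ and then invoke the elementary fact that a binary tree with $k$ leaves has at most $2k-1$ nodes, so that $|\mathcal{S}_{\mathcal{N}}| = O(\#\text{leaves})$. Every leaf is produced by Rule~3, and by construction its domain $\mathcal{D}(v)$ is a cell on which (i) the ReLU activation pattern of every hidden neuron in layers $l=2,\dots,L-1$ is fixed, and (ii) the $\arg\max$ index at the output layer is fixed. The leaves therefore refine a two-stage partition of $\mathbb{R}^{N^1}$: first into the activation regions cut out by the Rule~1 splits, and then, within each such region, into the argmax cells cut out by the Rule~2 splits. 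This factorization is exact because Rule~1 applicability is monotone down the tree — once all hidden activations are fixed on some $\mathcal{D}(v)$, they stay fixed on every descendant — so along any root-to-leaf path all Rule~1 splits precede all Rule~2 splits. Hence $\#\text{leaves} \le R_{\mathrm{act}}\cdot \max_{\text{region}} R_{\mathrm{out}}$, where $R_{\mathrm{act}}$ counts the activation regions and $R_{\mathrm{out}}$ bounds the number of Rule~2 leaves generated inside a single activation region. I would then bound the two factors separately.

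For $R_{\mathrm{act}}$, the Rule~1 splits are exactly the linear-region boundaries of the ReLU network restricted to its hidden layers $l=2,\dots,L-1$ (the output layer carries no ReLU and is handled in the second factor). I would therefore apply the upper bound of Serra~et~al.~\cite{serra2018bounding} on the number of linear regions of a ReLU network, which gives $R_{\mathrm{act}} \le \prod_{l=2}^{L-1}\sum_{j=0}^{N^1}\binom{N^l}{j}$. Since the top term dominates as $N^l$ grows with $N^1$ fixed, $\sum_{j=0}^{N^1}\binom{N^l}{j} \le (N^1+1)\binom{N^l}{N^1} = O\!\big((N^l)^{N^1}/(N^1)!\big)$, yielding $R_{\mathrm{act}} = O\big(\prod_{l=2}^{L-1}(N^l)^{N^1}/(N^1)!\big)$.

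For $R_{\mathrm{out}}$, I would analyze the recursion induced by Rule~2 inside a fixed activation region, where $\F{}{L}{v}$ is affine. With $m$ output indices still able to attain the maximum, a Rule~2 split on the hyperplane $\F{i_1}{L}{v}=\F{i_2}{L}{v}$ removes $i_2$ from the left child and $i_1$ from the right child but can retain every other candidate on both sides, so the number of Rule~2 leaves obeys $T(m)\le 2\,T(m-1)$ with $T(1)=1$. Starting from at most $N^L$ candidates this gives $R_{\mathrm{out}} = O(2^{N^L})$. Combining the two factors and using $2k-1 = O(k)$ produces the first expression $|\mathcal{S}_{\mathcal{N}}| = O\big(2^{N^L}\prod_{l=2}^{L-1}(N^l)^{N^1}/(N^1)!\big)$; relaxing each $N^l \le \overline{l}$ and loosening the exponents from $L-2$ to $L$ (valid since $\overline{l}\ge 1$ and $\overline{l}^{\,N^1}\ge (N^1)!$ in the regime of interest) yields the second, weaker form.

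The main obstacle is the output-layer factor. It is tempting to note that the upper envelope of $N^L$ affine functions over a convex cell has at most $N^L$ maximizer regions, which would give a much smaller factor; the point is that the binary SDT actually built by Rule~2 can be exponentially larger than this optimum, because each pairwise comparison removes only a single candidate per branch while duplicating the rest. Obtaining exactly the $2^{N^L}$ factor therefore hinges on justifying the $T(m)\le 2T(m-1)$ recursion rather than the tighter arrangement count, and on cleanly separating the hidden-layer activation splits (Rule~1) from the output splits (Rule~2) via the monotonicity argument, so that the Serra~et~al. bound is applied to precisely the layers $l=2,\dots,L-1$.
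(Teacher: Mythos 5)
Your proposal is correct and takes essentially the same route as the paper's proof: you bound the number of leaves by a product of per-hidden-layer activation-region counts (the Serra et al.\ bound, which the paper simply re-derives layer by layer via the hyperplane-arrangement region count of Ferlez et al., with each factor $\sum_{k=0}^{N^1}\binom{N^l}{k} = O\bigl((N^l)^{N^1}/(N^1)!\bigr)$) multiplied by a $2^{N^L-1}$ factor for the Rule~2 output splits, exactly as the paper does via its $\mathcal{V}_l(v)$ sets. Your explicit monotonicity observation (all Rule~1 splits precede all Rule~2 splits on any root-to-leaf path) and the recursion $T(m)\le 2T(m-1)$ are merely more carefully justified versions of steps the paper states tersely (``at most $N^L-1$ pairs satisfying Rule~2''), so the two arguments coincide in substance.
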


\begin{proof}
We introduce the notation $\mathcal{V}_l(v)$ to denote the set of nodes $v'$ such that $v'$ is a successor of $v$ and $\rho^{p(v')}(x)$ is derived via (\hyperlink{eq_condition_1}{A}) at layer $l$, while $\rho^{v'}(x)$ is not, i.e., 
\begin{equation}\nonumber
    \begin{split}
        \mathcal{V}_l(v) & = \{v' \mid \ v' \text{ is a successor node of } v \ \text{such that}: \\
        & \exists i \in \{1, \dots, N^l\}, \ \rho^{p(v')}(x) = \sigma(\g{i}{l}{p(v')}(x)), \\
        & \quad \text{and} \ \forall i' \in \{1, \dots, N^l\}, \rho^{v'}(x) \neq \sigma(\g{i'}{l}{v'}(x)) \},
    \end{split}
\end{equation}
where $\rho^{p(v')}(x) = \sigma(\g{i}{l}{p(v')}(x))$ denotes that $\rho^{p(v')}(x)$ is derived via (\hyperlink{eq_condition_1}{A}), while $\rho^{v'}(x) \neq \sigma(\g{i'}{l}{p(v')}(x))$ indicates that this is not the case for $\rho^{v'}(x)$. For example, in Fig.~\ref{fig_example}, we have $\mathcal{V}_2(v_0) = \{v_1, v_2\}$ because $\rho^{p(v_1)} = \rho^{p(v_2)} = \rho^{v_0} = \sigma(\g{1}{2}{v_0}(x))$ and node $v_1, v_2$ are derived via (\hyperlink{eq_condition_3}{C}) and (\hyperlink{eq_condition_2}{B}), respectively.

We are now ready to prove our theorem.
Given a neural network $\mathcal{N}$ and the transformed SDT $\mathcal{S}_{\mathcal{N}}$, for all nodes $v' \in \mathcal{V}_2(v_0)$, all parent nodes of $v'$ are derived via (\hyperlink{eq_condition_1}{A}) with $l=2$.
Therefore, for each $v'\in\mathcal{V}_2(v_0)$, $\mathcal{D}(v')$ is a region of the hyperplane arrangement corresponding to the set of affine functions of the form $\ell_i(x) = W^2_ix + B^2_i$ for $i\leq N^2$.
By Theorem~\ref{theorem_hyperplane}, the size of $\mathcal{V}_2(v_0)$ is at most $\sum_{k=0}^{N^1} \binom{N^2}{k}$.
Similarly, for all nodes $v' \in \mathcal{V}_2(v_0)$, the size of $\mathcal{V}_3(v')$ is at most $\sum_{k=0}^{N^1} \binom{N^3}{k}$.
Therefore, the size $|\mathcal{V}_{L-1}(v_0)|$ of the set of nodes $v$ whose parent nodes are derived via (\hyperlink{eq_condition_1}{A}) is at most $\prod_{i=2}^{L-1} \sum_{k=0}^{N^1} \binom{N^i}{k}$.
We note that all nodes $v'' \in \mathcal{V}_{L-1}(v_0)$ are not derived by (\hyperlink{eq_condition_1}{A}), while $p(v'')$ is derived by (\hyperlink{eq_condition_1}{A}). Moreover, for each node $v'' \in \mathcal{V}_{L-1}(v_0)$, there exist at most $(N^L - 1)$ pairs of $i_1$ and $i_2$ satisfying (\hyperlink{eq_condition_2}{B}).
Hence, there are at most $\prod_{i=2}^{L-1} \sum_{k=0}^{N^1} \binom{N^i}{k}\cdot 2^{N^L - 1}$ leaf nodes.
Therefore, $|\mathcal{S}_{\mathcal{N}}| \leq \prod_{i=2}^{L-1} \sum_{k=0}^{N^1} \binom{N^i}{k}\cdot 2^{N^L}$.
\end{proof}

Given a NN with hidden layers of uniform width $n$, we can also show that our procedure for creating an equivalent SDT results in a number of SDT nodes which scales polynomially with respect to the width of the NN hidden layers $n$.
\begin{lemma} \label{lemma_compactness}
Let $\mathcal{N}$ be a NN with hidden layer sizes $N^2 = N^3 = \dots = N^{L-1} = n$ and let $\mathcal{S}_{\mathcal{N}}$ be the SDT resulting from the application of Algorithm~\ref{alg_transformation} to $\mathcal{N}$.
Then, the number of nodes in $\mathcal{S}_{\mathcal{N}}$, denoted by $|\mathcal{S}_{\mathcal{N}}|$, scales polynomially with the width of the NN hidden layers $n$:
$$|\mathcal{S}_{\mathcal{N}}| = O\left(\frac{n^{(N^1)L}}{(N^1)!^{L}} 2^{N^L}\right).$$
\end{lemma}

\section{Verification Problem Formulations}\label{sec:ver_form}

\begin{figure}[t]
    \centering
    \includegraphics[width=\ifdim\columnwidth>0.6\textwidth 0.7\columnwidth \else 1.0\columnwidth \fi]{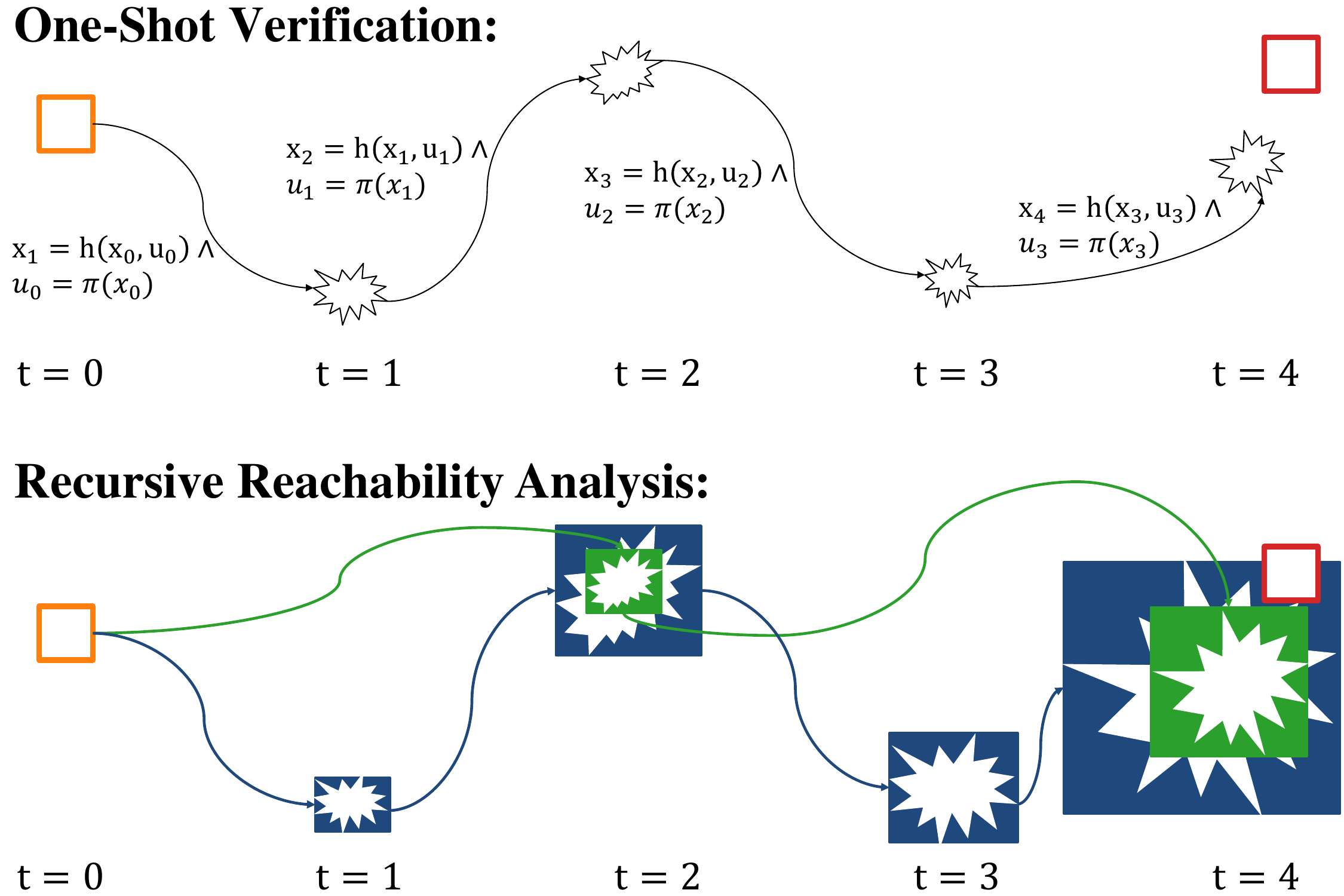}
    \caption{System-level representation, $\mathcal{X}_i$ (\protect\solidorangeline), $\mathcal{X}_g$ (\protect\solidredline), $s=1$ (\protect\solidblueline), $s=2$ (\protect\solidgreenline).}
    \label{fig_sys}
\end{figure}

We assess the impact of the proposed transformation by exploring verification problems for both SDT and NN controllers. We consider closed-loop controlled dynamical systems with state and discrete action spaces $\mathcal{X}$ and $\mathcal{U}$, respectively, and dynamics $h\,:\,\mathcal{X}\times\mathcal{U}\to\mathcal{X}$.
Let $\pi\,:\,\mathcal{X}\to\mathcal{U}$ denote a time-invariant Markovian policy (controller) mapping states to actions.
Given the system dynamics $h$, an initial set $\mathcal{X}_i\subseteq \mathcal{X}$, and goal set $\mathcal{X}_g\subseteq \mathcal{X}$, we wish to determine whether $\mathcal{X}_g$ is reachable in a finite horizon $T$ for all initial states $x_0 \in \mathcal{X}_i$ under $h$ and policy $\pi$. If so, we say that the specification $(\mathcal{X}_i,\mathcal{X}_g, T)$ is satisfied for $\pi$. 
In this context, we consider two verification approaches. 

We begin by introducing a one-shot verification methods, pictorially represented in Fig.\ref{fig_sys}, where the system dynamics over a finite horizon is encoded as a single satisfiability modulo theory (SMT) problem\cite{barrett2018satisfiability}.

\begin{problem}\textit{(One-Shot Verification).} We ``unroll" the system dynamics at time instants $t \in \{0, \ldots, T\}$ and encode the verification problem as an SMT problem using the bounded model checking approach~\cite{biere2009bounded}. Satisfaction of $(\mathcal{X}_i,\mathcal{X}_g, T)$ for a policy $\pi$ is then equivalent to showing that the following formula is \emph{not} satisfiable:
\begin{equation}
\label{eq_monolith_formula}
\begin{split}
    &\phi_1(\pi) := (x_0 \in \mathcal{X}_{i}) \wedge (x_T \not \in \mathcal{X}_g) \ \wedge \\
    &\bigwedge_{t=0}^{T-1}(x_{t + 1} = h(x_t, u_t)) \wedge (u_t = \pi(x_t)).
\end{split}
\end{equation}
\end{problem}
If $\phi_1(\pi)$ is false, then $\pi$ is guaranteed to drive the system from any $x_0\in \mathcal{X}_i$ to $\mathcal{X}_g$ within the finite horizon $T$. 

We observe that \eqref{eq_monolith_formula} requires encoding $T$ replicas of both the system dynamics $h$ and the NN policy $\pi$ in the control loop, which can make the SMT problem intractable due to its computational complexity.
Therefore, we also consider an alternative verification approach via reachability analysis.
In particular, we adopt a \emph{recursive reachability analysis}~\cite{chen2023one}, where we use the $s$-step unrolled dynamics, with $s \ll T$, to compute an over-approximation of the $s$-step reachable set in terms of a rectangle.
As shown in Fig.~\ref{fig_sys}, when setting $s = 1$, we over-approximate the reachable set at each iteration.
Alternatively, when $s = 2$, we unfold the dynamics over a 2-step horizon to mitigate the impact of the over-approximation error obtained when estimating the reachable set at $t = 1$.
The distinction between the rectangle $R_t$ for $s = 1$, in blue, and the rectangle $R_t$ for $s = 2$, in green,  visually represents the impact of adjusting the unrolling parameter on the precision of the reachable set estimation.

\begin{problem}
\textit{(Recursive Reachability Analysis (RRA)).} We fix a step parameter $s$ and recursively compute reachable sets $\mathcal{R}_{t}$, where $t=m s$ for $m\in\mathbb{N}$.
Given the system dynamics $h$, policy $\pi$, and step parameter $s$, we encode the reachable set at time $t$ as the SMT formula
\begin{equation}
\label{eq_reachability_formula}
\begin{split}
    & \mathcal{R}_t(\pi) = \bigg\{ x_t \bigg| (x_{t-s} \in \mathcal{R}_{t-s}) \;\wedge \\
    & \quad\quad\quad\quad\bigwedge_{k=t-s}^{t-1}(x_{k + 1} = h(x_k, u_k)) \wedge 
   (u_k = \pi(x_k))\bigg\}. \\
\end{split}
\end{equation}
In cases where $t$ is not divisible by the step $s$, i.e., when $t = ms + r$ with $r < s$, we calculate the reachable set $\mathcal{R}_t$ by referencing the previously computed reachable set at time $ms$, denoted as $\mathcal{R}_{ms}$. 

By setting $\mathcal{R}_0=\mathcal{X}_i$, we can verify that the specification $(\mathcal{X}_i,\mathcal{X}_g, T)$ is satisfied by checking satisfaction of the following SMT formula  
\begin{equation} \nonumber
\begin{split}
& \phi_{\text{RRA}}(\pi) := (x_0 \in \mathcal{X}_i)\wedge (x_T \not \in \mathcal{X}_g)\ \wedge \\
& \bigwedge_{m=1}^{\lfloor T/s\rfloor}(x_{ms} \in \mathcal{R}_{ms}(\pi))\ \wedge \\
& \bigwedge_{t=\lfloor T/s \rfloor s}^{T-1}(x_{t + 1} = h(x_t, u_t)) \wedge (u_t = \pi(x_t)).\\
\end{split}
\end{equation}
\end{problem}
While RRA may yield overly conservative reachable sets due to error propagation, it is often more tractable than one-shot verification, since it decomposes the overall reachability problem into a set of smaller sub-problems, each having a finite horizon of $s$ and encoding the NN policy only $s$ times, with $s \ll T$.

\section{Case Studies} \label{sec_experiment}

We evaluate the proposed transformation on three environments from the OpenAI Gym library~\cite{brockman2016openai}, as detailed below.

\subsection{The Environments} \label{sec_env}

\begin{figure}[t]
    \centering
    \includegraphics[width=\ifdim\columnwidth>0.6\textwidth 0.5\columnwidth \else 0.6\columnwidth \fi]{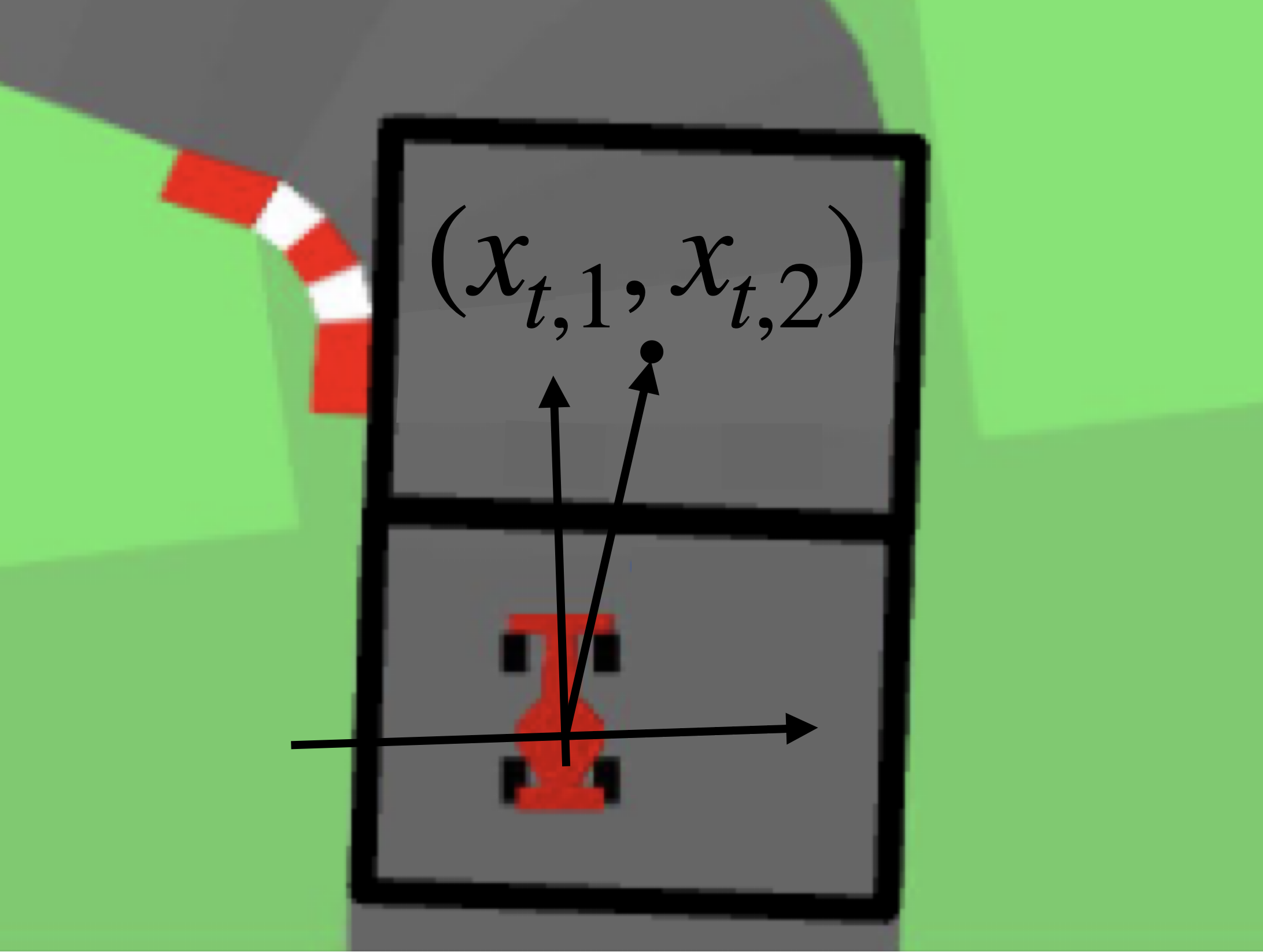}
    \caption{An illustration of the car racing problem. The tuple $(x_{t,2},x_{t,3})$ denotes the coordinates of the center of the tile ahead of the car, relative to the axes centered on the car as shown.}
    \label{fig_car_racing}
\end{figure}

\subsubsection{MountainCar-v0} \label{sec_mountain_car}

In the MountainCar control task, an underpowered car needs to reach the top of a hill starting from a valley within a fixed time horizon $T$~\cite{moore1990efficient}.
The state vector $x_t \in \mathbb{R}^2$ comprises the car's position $x_{t,1}$ and horizontal velocity $x_{t,2}$ at time step $t$, while the action $u_t \in \{-1, 0, 1\}$ represents the car's acceleration action, either left (L), idle (I), or right (R), respectively. The system dynamics is detailed in Appendix~\ref{app_mountaincar_dynamics}. The objective is to reach the flag as quickly as possible, and the agent incurs a penalty of $-1$ for each time step taken.

While the reference NNs are trained to reach the top of the hill in the minimum number of steps, we seek to verify whether a given controller will reach the goal state within $T$ steps, starting from any point in an initial interval.
Our specification $(\mathcal{X}_i,\mathcal{X}_g,T)$ is given by
\begin{equation}\nonumber
\begin{split}
  & \mathcal{X}_{i} = \{x_{0, 1}\in [\mathfrak{i}_l ,\mathfrak{i}_u],x_{0, 1}= 0 \},\,\,\mathcal{X}_{g} =  \{x_{T, 1} \geq \mathfrak{g}\}, 
\end{split}
\end{equation}
where parameters $\mathfrak{i}_l$ and $\mathfrak{i}_u$ denote upper and lower bounds on the initial state, respectively, and $\mathfrak{g}$ denotes the goal coordinates.

\subsubsection{CartPole-v1} \label{sec_cart_pole}

In the CartPole control task~\cite{kumar2024empirical}, a pole is attached to a cart moving along a frictionless track, to be balanced upright for as long as possible within a fixed horizon $T$.
The state vector $x_t\in\mathbb{R}^4$ consists of the cart position $x_{t,1}$, the cart velocity $x_{t,2}$, the pole angle $x_{t,3}$, and the pole angular velocity $x_{t,4}$ at time step $t$. The action $u_t \in \{0, 1\}$ denotes the acceleration of the cart to the left or right. The system dynamics is detailed in Appendix~\ref{app_cartpole_dynamics}.
A reward of $1$ is given for every time step in which the pole is maintained in an upright position.
Additionally, the time horizon is truncated to a maximum of 500 steps, as this duration is empirically shown to be sufficient to demonstrate the agent's effectiveness in performing the task~\cite{kumar2024empirical}.

We seek to verify that the pole is balanced upright within tolerance $\mathfrak{g}$ at the end of horizon $T$, starting from a range of initial cart positions and pole angles.
With $\mathfrak{i}_l$, $\mathfrak{g}$ as parameters, our specification $(\mathcal{X}_i,\mathcal{X}_g,T)$ is given by
\begin{equation}\nonumber
\begin{split}
& \mathcal{X}_{i} = \{x_{0,1}\in [\mathfrak{i}_l,0], x_{0,3} \in [\mathfrak{i}_l,0], x_{0,2} = x_{0,4} = 0\}\\
& \mathcal{X}_{g} = \{ |x_{T,3}| \geq \mathfrak{g}\}.
\end{split}
\end{equation}

\subsubsection{CarRacing-v2} \label{sec_car_racing}

The CarRacing-v2 environment is a  simulation environment in which the state is represented by a $96 \times 96$-pixel image in RGB format. Rather than directly processing high-dimensional image inputs that may lead to NN that cannot be efficiently transformed, our purpose is to explore the feasibility of extracting a succinct but meaningful set of state features from image-based environments in an autonomous driving context, and then transforming the associated NN controllers into SDTs. At each time step $t$, we extract the state vector $x_t \in \mathbb{R}^3$ from the environment as shown in Fig.~\ref{fig_car_racing}.
The components of the state vector are defined as follows: $x_{t,1}$ represents the car's velocity, while $(x_{t,2}, x_{t,3})$ denotes the relative coordinates of the center of the second tile ahead within the current point of view. 
These coordinates are acquired directly from the simulator.
The action $u_t$ corresponds to the car's action, which can be idle (I), left (L), right (R), gas (G), or brake (B), respectively. 
Simultaneous combinations of multiple actions at a step are not allowed.

The agent's objective in this environment is to navigate a race track successfully while minimizing the completion time. The reward system operates as follows: for each frame, the agent receives a reward of $-0.1$. Additionally, the agent is awarded a bonus reward of $1000/T$ for each track tile it visits, where $T$ corresponds to the total number of tiles present throughout the entire track.

\begin{figure*}[t]
    \centering
    \begin{subfigure}{0.32\textwidth}
    \includegraphics[width=\linewidth]{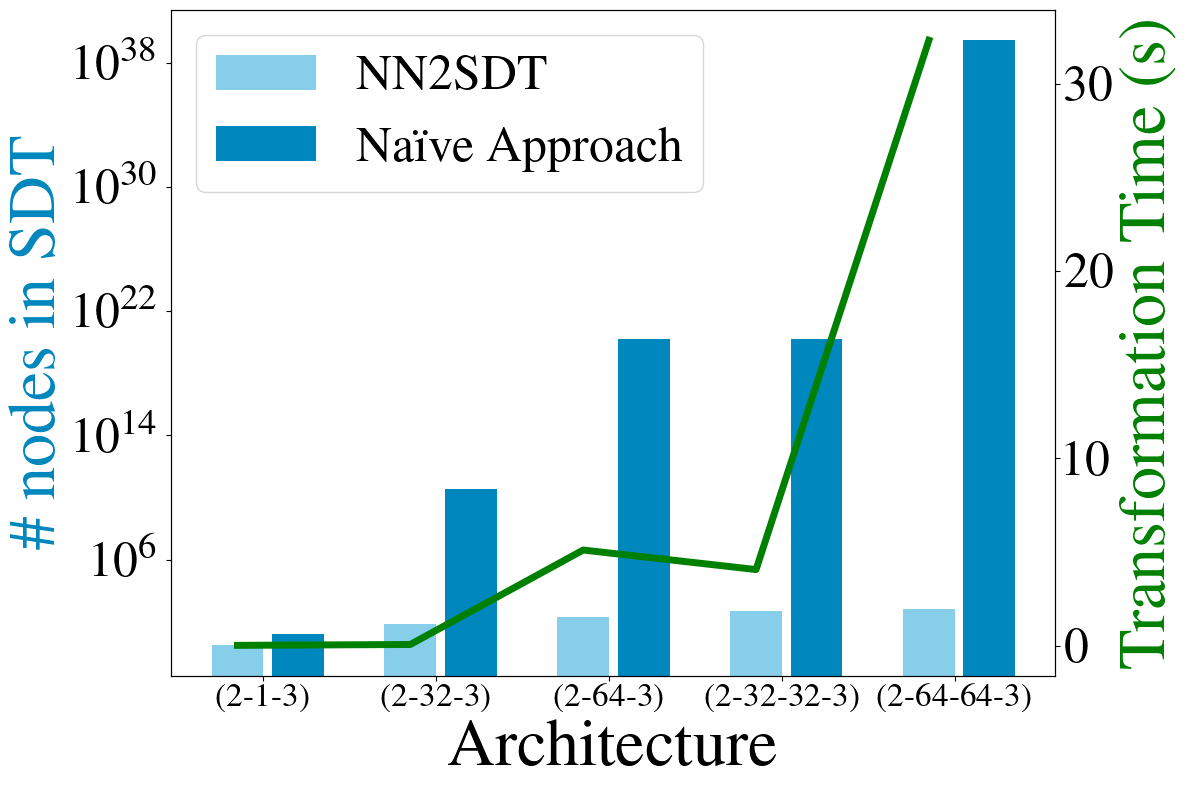}
    \caption{MountainCar}
    \end{subfigure}~
    \begin{subfigure}{0.32\linewidth}
    \includegraphics[width=\linewidth]{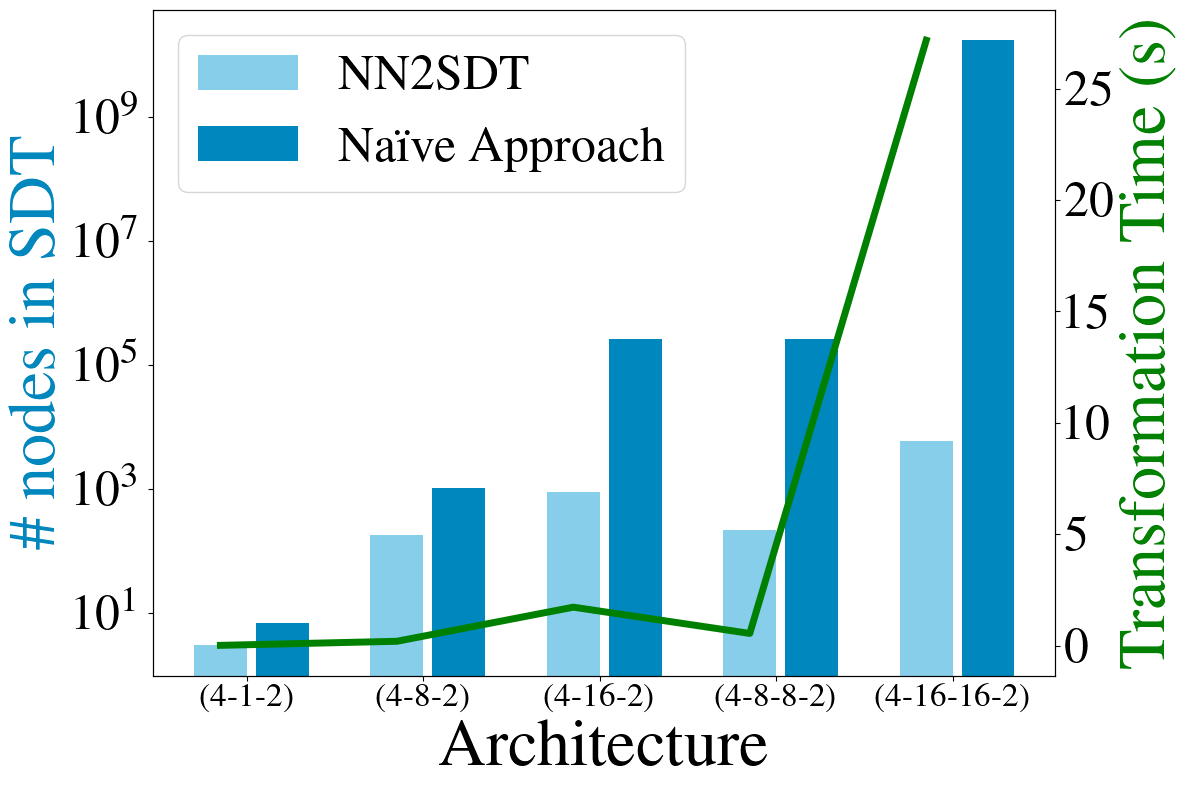}
    \caption{CartPole}
    \end{subfigure}~
    \begin{subfigure}{0.32\linewidth}
    \includegraphics[width=\linewidth]{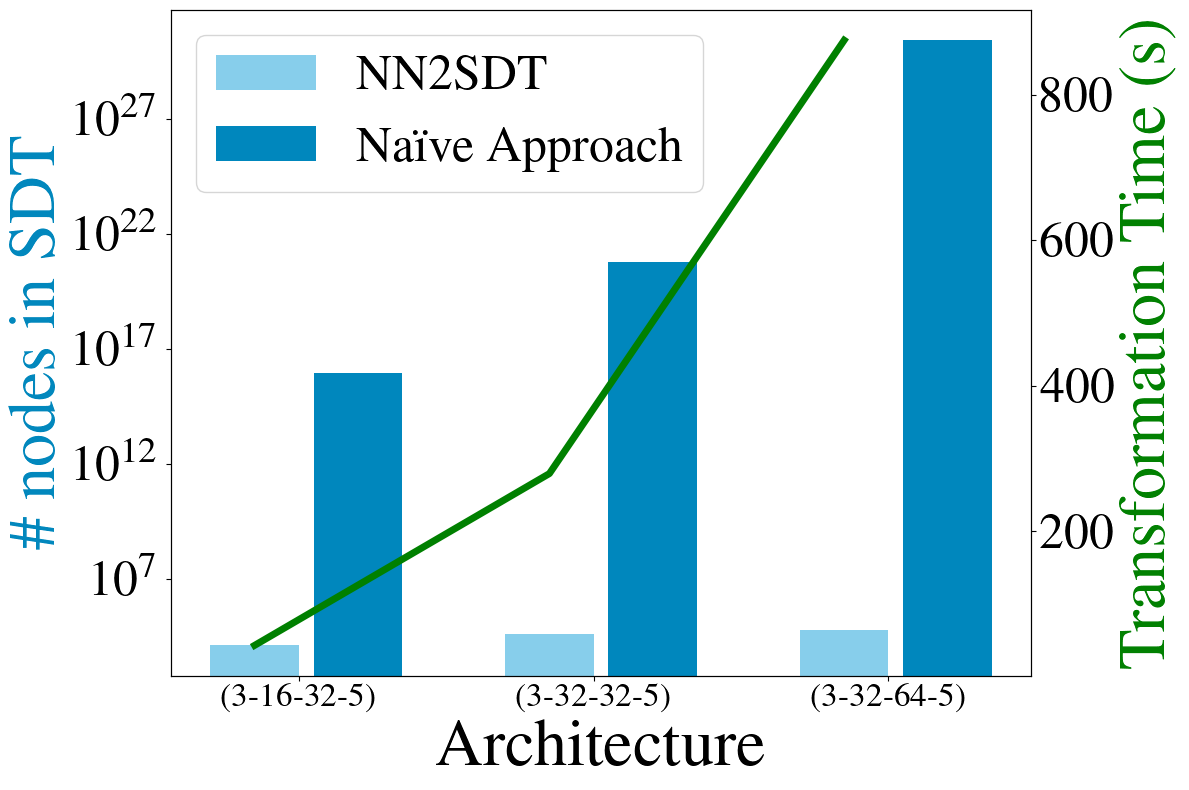}
    \caption{CarRacing}
    \end{subfigure}
    \caption{Size of the transformed SDT and transformation time. The number of tree nodes under the na\"{i}ve approach is calculated by $2^{\sum_{i=2}^{L} N^i} - 1$.}
    \label{fig_transformation}
\end{figure*}

\subsection{Reference NN Controllers}

\begin{table}
\caption{Hyperparameters in OpenAI Gym.}
\label{tbl_parameter}
\begin{center}
\begin{tabular}{|c | c | c | c |}
\hline
    Environment& $\mathfrak{i}_l$&$\mathfrak{i}_u$&$\mathfrak{g}$ \\
\hline
    MountainCar &$-0.11$&$-0.1$&$0.5$\\
\hline
    CartPole &$-0.1$ &N/A&$\pi/15$\\
\hline 
\end{tabular}
\end{center}
\end{table}

\begin{table}
\caption{Average performance of reference controllers over 100 episodes.}
\label{tbl_performance}
\begin{center}
    \scriptsize
\begin{tabular}{|c | c | c |}
\hline
Environment& $\mathbf{N}$ & Avg. Reward\\
\hline
\multirow{5}{*}{MountainCar}
        &$(2,1,3)$ & -126.06 \\
        &$(2,32,3)$ & -134.33 \\
        &$(2,64,3)$ & -154.76 \\
        &$(2,32,32,3)$ & -122.08 \\
        &$(2,64,64,3)$ & -142.57 \\
\hline
\multirow{5}{*}{CartPole}
        & $(4,1,2)$ &$43.56$\\
        & $(4,8,2)$ &$256.03$\\
        & $(4,16,2)$ &$125.05$\\
        & $(4,8,8,2)$ &$500.00$\\
        & $(4,16,16,2)$ &$169.06$\\
\hline 
\multirow{4}{*}{CarRacing}
        & Convolutional DQN & $761.35$ \\
        & $(3,16,32,5)$ &$647.78$ \\
        & $(3,32,32,5)$ &$854.49$ \\
        & $(3,32,64,5)$ &$419.79$ \\
\hline
\end{tabular}
\end{center}
\end{table}

We adopt the problem setups available in the open-source package OpenAI Gym~\cite{brockman2016openai} and the parameters in Table~\ref{tbl_parameter}.
We begin by training reference NN controllers over $5 \times 10^6$, $2 \times 10^6$, and $2 \times 10^7$ steps for the MountainCar, CartPole, and CarRacing problems, respectively.
The average performance scores from this training are shown in Table~\ref{tbl_performance}.
In Table~\ref{tbl_performance}, ``Convolutional DQN" refers to the default convolutional Deep Q-Network available in the stable-baselines framework \cite{raffin2021stable}.
An interesting observation is that the state controller utilizing a three-dimensional state vector $x_t \in \mathbb{R}^3$ achieves higher scores than the Convolutional DQN.
This indicates that utilizing a three-dimensional state vector is effective for driving control in the CarRacing scenario.
Motivated by this finding, we proceed to apply our proposed transformation technique to construct equivalent SDTs.

\subsection{Evaluation of the Transformation Procedure}

Figure ~\ref{fig_transformation} illustrates the transformation time and the number of nodes of the SDT corresponding to each NN controller.
We also display the estimated number of nodes under a na\"{i}ve transformation method that produces a decision tree that includes a leaf node for every possible combination of choices determined by the 
ReLU activation regimes of the NN inner layers as well as the orderings of the output layer values. That is, each path from the root to a leaf node consists of branches determined by inner layer ReLU functions and comparisons between output layer values.  
The number of tree nodes created under this method is $2^{\sum_{i=2}^{L} N^i} - 1$, where $\sum_{i=2}^{L} N^i$ gives the total number of NN inner and output layer neurons. 
In Fig.~\ref{fig_ex_NN}, we observe that $2^{\sum_{i=2}^{L} N^i} - 1 = 2^4 - 1 = 15$.
As shown in Fig.~\ref{fig_transformation}, the percentage of size reduction for the SDT increases with the number of NN layers. 
Furthermore, the size of the transformed SDT for the CartPole problem increases faster than that for the MountainCar problem, which can be attributed to the difference in $N^1 = |x_t|$. 
Nevertheless, we were able to transform all NN controllers in under $60$ seconds for the MountainCar and CartPole problems, and under $900$ seconds for the CarRacing environment, while existing methods failed with NNs exceeding ten neurons.

\subsection{Verification Results}

\begin{figure*}[t]
    \centering
    \includegraphics[width=1.0\textwidth]{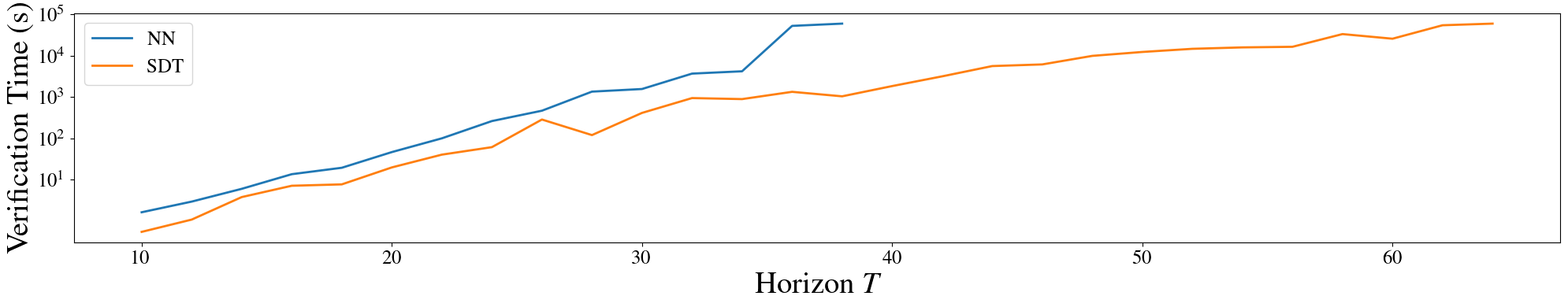}
    \caption{One-shot verification times for the MountainCar NN controller with $\mathbf{N} = (2, 1, 3)$ and the transformed SDT controller.}
    \label{fig_verification}
\end{figure*}

\begin{figure*}[t]
    \centering
    \begin{subfigure}{1\textwidth}
        \centering
        \includegraphics[width=1.0\linewidth]{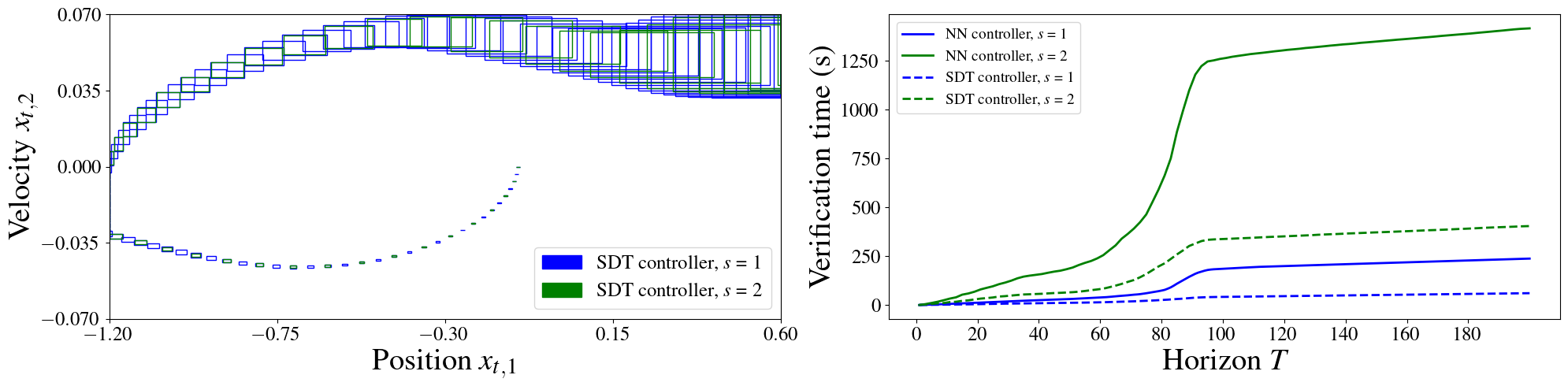}
        \caption{Reachable sets and verification time for MountainCar NN with $\mathbf{N} = (2, 32, 32, 3)$.}
        \label{fig_reachability_verification_mountain_car}
    \end{subfigure}
    \begin{subfigure}{1\textwidth}
        \centering
        \includegraphics[width=1.0\linewidth]{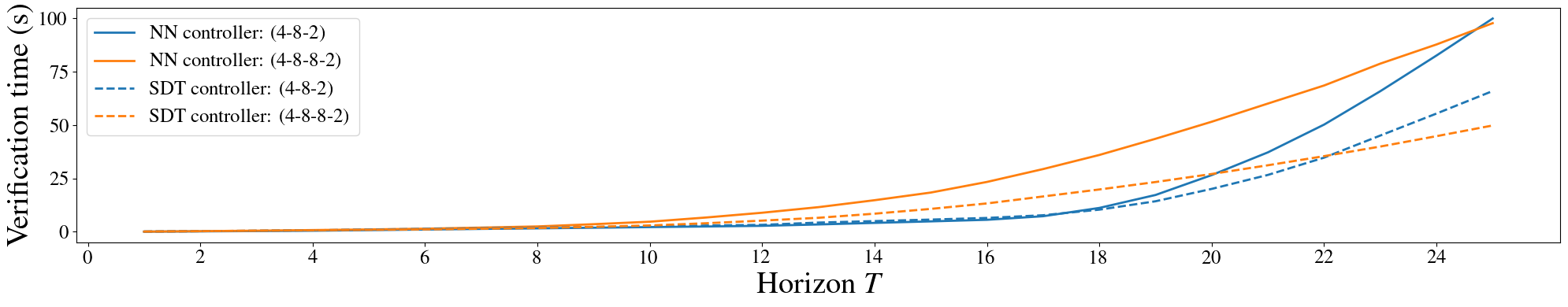}
        \caption{Verification time for CartPole.}
        \label{fig_reachability_verification_cart_pole}
    \end{subfigure}
    \caption{Recursive reachability analysis results.}
\end{figure*}

A primary reason for transforming NN controllers to SDT controllers is to explore the reduction in computational burden for solving the SMT problems involved in the verification formulations presented in Section \ref{sec:ver_form}. All SMT problems in our experiments are solved using an in-house implementation of a satisfiability modulo convex programming (SMC) solver~\cite{shoukry2018smc}, which integrates the Z3 satisfiability (SAT) solver~\cite{de2008z3} with Gurobi~\cite{achterberg2019s}.
Beginning with the one-shot verification approach, Fig.~\ref{fig_verification} shows a comparison of the verification times for $\mathcal{N}$ with $\mathbf{N} = (2, 1, 3)$ and $\mathcal{S}_{\mathcal{N}}$ with three leaf nodes in the MountainCar problem. The verification time for the NN increases significantly compared to the equivalent SDT, over two orders of magnitude. However, verification takes more than 60000 seconds for $\mathcal{N}$ when $T>38$ and for $\mathcal{S}_{\mathcal{N}}$ when $T > 64$. For the CartPole problem, all controllers fail to solve the verification problem \eqref{eq_monolith_formula} within 60000 seconds for $T>10$, highlighting the challenges associated with one-shot verification.

We then evaluate the efficiency of RRA for both NN and SDT controllers. Figure~\ref{fig_reachability_verification_mountain_car} plots the reachable sets and the overall verification time for an example MountainCar controller with $s \in \{1,2\}$ in \eqref{eq_reachability_formula}. As discussed in Section~\ref{sec:ver_form}, the area of the estimated reachable set for $s=1$ exceeds that obtained for $s=2$. RRA verification enables the verification of properties in a system with a larger horizon compared to one-shot verification. Figure~\ref{fig_reachability_verification_cart_pole} shows the verification for a collection of CartPole controllers with $s=1$ in \eqref{eq_reachability_formula}. RRA verification becomes more efficient using the equivalent SDT controllers. 

\begin{figure*}[t]
    \centering
    \begin{subfigure}{\ifdim\columnwidth>0.6\textwidth 0.45\columnwidth \else 0.9\columnwidth \fi}
        \includegraphics[width=\linewidth]{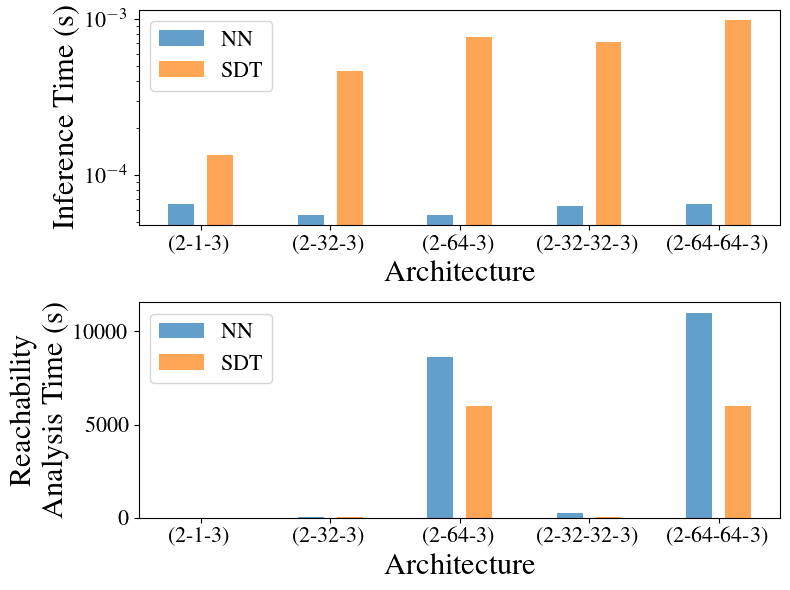}
        \caption{MountainCar.}
    \end{subfigure}
    \begin{subfigure}{\ifdim\columnwidth>0.6\textwidth 0.45\columnwidth \else 0.9\columnwidth \fi}
        \includegraphics[width=\linewidth]{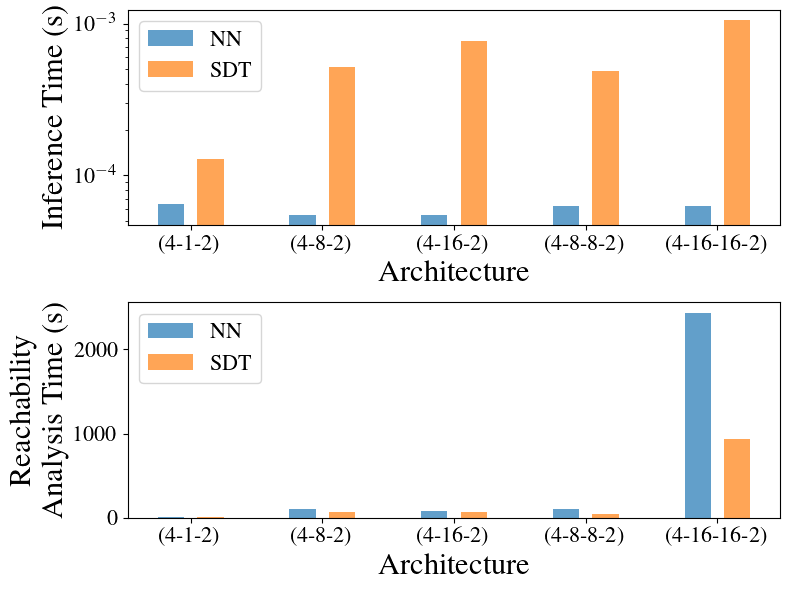}
        \caption{CartPole.}
    \end{subfigure}
    \caption{Inference and RRA runtimes for NN and SDT controllers. For RRA, $T=200$ for MountainCar and $T=25$ for CartPole. NN outperforms SDT in inference time by an order of magnitude on a log scale, while SDT accelerates verification for the MountainCar and CartPole problems by up to $21\times$ and $2\times$, respectively.}
    \label{fig_overall}
\end{figure*}

Figure~\ref{fig_overall} compares the average inference and overall RRA verification times for a collection of NN and equivalent SDT controllers.
We set $T=200$ for the MountainCar problem and $T=25$ for the CartPole problem.
As shown, the difference in inference time increases with the number of layers in the NN.
This behavior is attributed to the capability of NNs to undergo parallel hardware acceleration.
Our results show that the transformed SDT controller accelerates verification for the MountainCar problem by up to $21 \times$ and for the CartPole problem by up to $2\times$.
Moreover, the larger the number of layers in the NN, the larger the difference between runtime for NN verification and runtime for SDT verification.

\begin{figure*}[t]
    \centering
    \begin{subfigure}{\ifdim\columnwidth>0.6\textwidth 0.45\columnwidth \else 0.9\columnwidth \fi}
        \includegraphics[width=\linewidth]{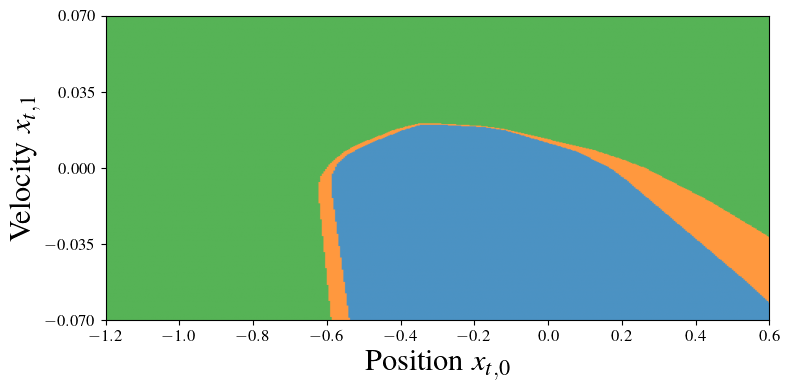}
        \caption{$\mathbf{N} = (2, 64, 3)$}
    \end{subfigure}
    \begin{subfigure}{\ifdim\columnwidth>0.6\textwidth 0.45\columnwidth \else 0.9\columnwidth \fi}
        \includegraphics[width=\linewidth]{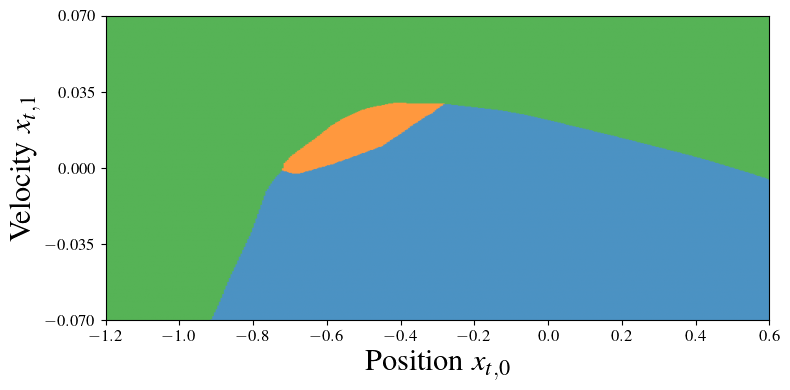}
        \caption{$\mathbf{N} = (2, 64, 64, 3)$}
    \end{subfigure}
    \caption{MountainCar NN policies. Legend: L (\protect\solidblueline), I (\protect\solidorangeline), R (\protect\solidgreenline).}
    \label{fig_action_mountain_car}
\end{figure*}

\begin{figure*}[t]
    \centering
    \begin{subfigure}{\ifdim\columnwidth>0.6\textwidth 0.45\columnwidth \else 0.9\columnwidth \fi}
        \includegraphics[width=\linewidth]{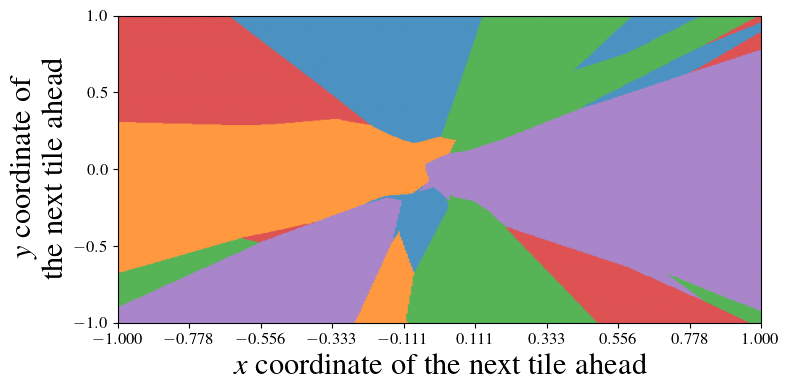}
        \caption{$\begin{array}{l}
                x_{t,1} = 50~m/s
        \end{array}$}
    \end{subfigure}
    \begin{subfigure}{\ifdim\columnwidth>0.6\textwidth 0.45\columnwidth \else 0.9\columnwidth \fi}
        \includegraphics[width=\linewidth]{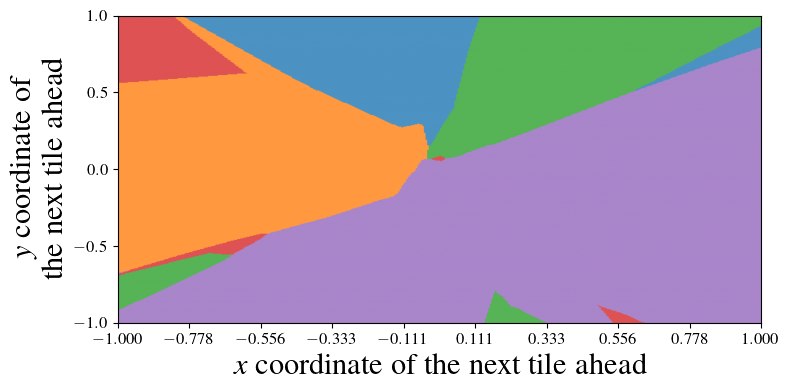}
        \caption{$\begin{array}{l}
                x_{t,1} = 100~m/s
        \end{array}$}
    \end{subfigure}
    \caption{CarRacing NN policies for $\mathbf{N} = (3, 32, 32, 5)$. Legend: I (\protect\solidblueline), L (\protect\solidorangeline), R (\protect\solidgreenline), G (\protect\solidredline), B (\protect\solidpurpleline).}
    \label{fig_action_car_racing}
\end{figure*}

An advantage of SDT controllers is that the policy can be inferred from the tree structure.
Figure~\ref{fig_action_mountain_car} shows the policies and SDT leaf node partitions for example MountainCar controllers.
We observe that constant action state-space regions become more complex as the depth of the NN increases.
Figure~\ref{fig_action_car_racing} plots the policies and leaf node partitions of the CarRacing controller with $\mathbf{N} = (3, 32, 32, 5)$.
The observed behavior indicates that when the vehicle is traveling at high speed and the relative position of the second tile ahead is on the left, the controller instructs the vehicle to turn left.
Conversely, if the relative position of the second tile ahead is on the right, the controller advises the vehicle to turn right.
Finally, when the vehicle is not traveling at high speed, the controller's strategy is to accelerate when the second tile ahead is further away, presumably to optimize its position and speed.

\section{Conclusion}

We proposed a cost-effective algorithm to construct equivalent SDT  controllers from any discrete-output argmax-based NN controller. Empirical evaluation of formal verification tasks performed on two benchmark OpenAI Gym environments shows a significant reduction in verification times for SDT controllers. Future work includes experimenting with more sophisticated RL environments and performing more extensive comparisons with state-of-the-art verification tools. 

\section*{References}
\bibliographystyle{IEEEtran}
\bibliography{root}

\appendices

\section{Notation} \label{app_notation}
We summarize the notation used in this paper in Table~\ref{tbl_notation}.

\begin{table}[H]
\caption{Notation adopted in the paper.}
\label{tbl_notation}
\begin{tabularx}{\columnwidth}{@{}p{0.3\columnwidth}X@{}}
    \textbf{Notation} & \textbf{Definition} \\
    \toprule
    \textbf{General} \\
    $A_{i,j}$ & Element at row $i$ and column $j$ in matrix $A$ \\
    $a_i$ or $(a)_i$ & $i$-th element of vector $a$ \\
    $\mathds{1}_{x \in D}$ & Indicator function \\ \\
    \textbf{Neural Network} \\
    $\mathcal{N}$ & Neural network \\
    $L$ & Total number of layers \\
    $N^l$ & Number of neurons in the $l$-th layer \\
    $\mathbf{N}$ & Vector of neuron counts $(N^1, \ldots, N^L)$ \\
    $W^l, B^l$ & Weights and biases in the $l$-th layer \\
    $\alpha$ & Element-wise ReLU activation function \\
    $f_F^l(x)$ & Feedforward function for the $l$-th layer \\
    $f_\mathcal{N}^l(x)$ & Characteristic function at the $l$-th layer of $\mathcal{N}$ \\
    $f_\mathcal{N}(x)$ & Characteristic function of neural network $\mathcal{N}$ \\ \\
    \textbf{Soft Decision Tree} \\
    $\mathcal{S}$ & Soft decision tree \\
    $\mathcal{I}_{\mathcal{S}}, \mathcal{L}_{\mathcal{S}}$ & Sets of inner and leaf nodes for $\mathcal{S}$ \\
    $v$ & Node \\
    $w^v, b^v$ & Weights and biases associated with inner node $v$ \\
    $\rho^v(x)$ & Probability threshold for inner node $v$ \\
    $Q^v$ & Distribution associated with leaf node $v$ \\
    $l(v), r(v)$ & Left and right child of inner node $v$ \\
    $p(v)$ & Parent of node $v$ \\
    $f_\mathcal{S}^v(x)$ & Characteristic function of node $v$ \\
    $f_\mathcal{S}(x)$ & Characteristic function of soft decision tree $\mathcal{S}$ \\
    $\mathcal{D}(v)$ & Effective domain of node $v$ \\ \\
    \textbf{Transformation} \\
    $\mathcal{S}_\mathcal{N}$ & Transformed $\mathcal{S}$ from $\mathcal{N}$ \\
    $\bot$ & Undefined value \\
    $\g{i}{l}{v}(x)$ & Pre-activation function providing the output of the $i$-th neuron in the $l$-th layer of the neural network as a function of input in $\mathcal{D}(v)$ prior to the ReLU \\
    $\overline{\g{i}{l}{v}(x)}$ & Post-activation function providing the output of the $i$-th neuron in the $l$-th layer of the neural network as a function of input in $\mathcal{D}(v)$ after the ReLU \\ \\ 
    \textbf{Complexity Analysis} \\
    $\mathcal{A}$ & Set of affine functions \\
    $\mathcal{H}_\mathcal{A}$ & Arrangement of hyperplanes \\
    $R_\mathcal{A}$ & Region of $\mathcal{H}_\mathcal{A}$ \\
    $\mathfrak{R}_{\mathcal{H}_\mathcal{A}}$ & Set of all regions of $\mathcal{A}$ \\
    $\mathcal{V}_l(v)$ & Set of successor nodes of $v$ derived via (\hyperlink{eq_condition_1}{A}) at layer $l$ while the parent is not \\
    \bottomrule
\end{tabularx}
\end{table}

\section{System Dynamics Used in the Case Studies} 

We outline the system dynamics used for the MountainCar and CartPole case studies, introduced in Section~\ref{sec_env}.

\subsection{MountainCar} \label{app_mountaincar_dynamics}

The system dynamics is given by 
\begin{equation}
\label{eq_mountain_car}
\begin{split}
h(x_t, u_t) = 
 \left( \begin{array}{l} x_{t,1} + x_{t,2} \\ x_{t,2} + \alpha (u_t - 1) - \beta \cos(\gamma x_{t,1}) \\ \end{array} \right)
\end{split}, 
\end{equation}
where the state vector $x_t \in \mathbb{R}^2$ comprises the car's position $x_{t,1}$ and horizontal velocity $x_{t,2}$ at time step $t$.
The action $u_t \in \{-1, 0, 1\}$ represents the car's acceleration, which can be left (L), idle (I), or right (R). 
The parameters $\alpha$, $\beta$, and $\gamma$ correspond to the magnitude of the acceleration, the time period, and the slope of the mountain, respectively.

The dynamics $h$ in \eqref{eq_mountain_car} involves a non-linear cosine function, which is non-convex and cannot be directly supported by an SMC solver.
We then approximate the cosine function using a piecewise linear function. 
Specifically, since the cosine function is concave over the interval $[-\frac{\pi}{2}, \frac{\pi}{2}],$ we partition this interval uniformly into multiple segments.
For each segment, with left endpoint $x_l$ and right endpoint $x_r$ such that $\frac{-\pi}{2} \leq x_l \leq x_r \leq \frac{\pi}{2}$, we employ linear lower and upper bounds to approximate the cosine function as follows:
\begin{equation}
\nonumber
\begin{split}
    & \cos(x_l) + (x - x_l) \frac{\cos(x_r) - \cos(x_l)}{x_r - x_l} \leq \cos(x) \leq \\
    &\quad\quad \cos\left(\frac{x_l + x_r}{2}\right) - \left(x - \left(\frac{x_l + x_r}{2}\right)\right)\sin\left(\frac{x_l + x_r}{2}\right),
\end{split}
\end{equation}

We use an analogous approximation over the input range $[\frac{\pi}{2}, \frac{3\pi}{2}]$, where the cosine function is convex.

\subsection{CartPole} \label{app_cartpole_dynamics}

The system dynamics is given by
\begin{equation}
\label{eq_cart_pole}
\begin{split}
& x_{t+1} = h(x_t, u_t)\\
& = \left( 
\begin{array}{l}
    x_{t,1} + \Delta_t x_{t,1} \\
    x_{t,2} + \Delta_t \frac{f(2u_t - 1) + m_p l (x_{t,4}^2\sin x_{t,3} - \dot{x}_{t,4}^2\cos x_{t,3})}{m_p + m_c} \\
    x_{t,3} + \Delta_t x_{t,4} \\
    x_{t,4} + \Delta_t \dot{x}_{t,4} \\
\end{array} \right) \\
& \dot{x}_{t,4} = \frac{g\sin x_{t,3} - \cos x_{t,3} \left(\frac{f(2u_t - 1) + m_p l (x_{t,4}^2\sin x_{t,3})}{m_p + m_c}\right)}{l \left(\frac{4}{3} - \frac{m_p\cos^2 x_{t,3}}{m_p + m_c}\right)}, \\
\end{split}, 
\end{equation}
where $f$ denotes the magnitude of acceleration, $m_p$ and $m_c$ represent the masses of the pole and the cart respectively, $l$ stands for the length of the pole, and $\Delta_t$ indicates the time step length.
The state vector $x_t\in\mathbb{R}^4$ consists of the cart position $x_{t,1}$, cart velocity $x_{t,2}$, pole angle $x_{t,3}$, and pole angular velocity $x_{t,4}$ at time step $t$.
The action $u_t \in \{0, 1\}$ specifies whether the cart accelerates to the left or right.

The dynamics $h$ in \eqref{eq_cart_pole} also involves a non-linear cosine function similar to \eqref{eq_mountain_car}, which is non-convex and cannot be directly supported by an SMC solver.
We then approximate the cosine function using a piecewise linear function detailed in Appendix~\ref{app_mountaincar_dynamics}. 

\begin{IEEEbiography}[{\includegraphics[width=1in,height=1.25in,clip,keepaspectratio]{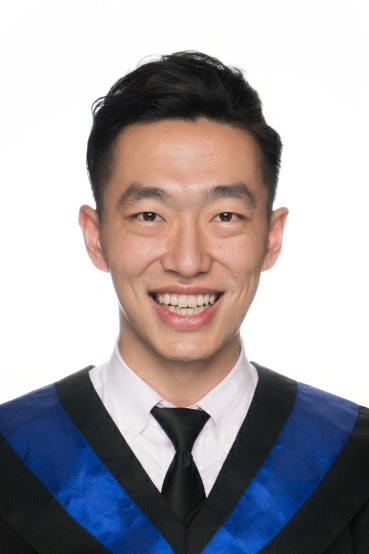}}]{Kevin Chang}
Kevin Chang joined the Department of Electrical and Computer Engineering at the University of Southern California in Los Angeles as a PhD candidate.
He holds a B.S. in Electrical Engineering from National Taiwan University.
His research specializes in machine learning, verification, and closed-loop control, with a particular emphasis on applications in neural networks.
\end{IEEEbiography}
\begin{IEEEbiography}[{\includegraphics[width=1in,height=1.25in,clip,keepaspectratio]{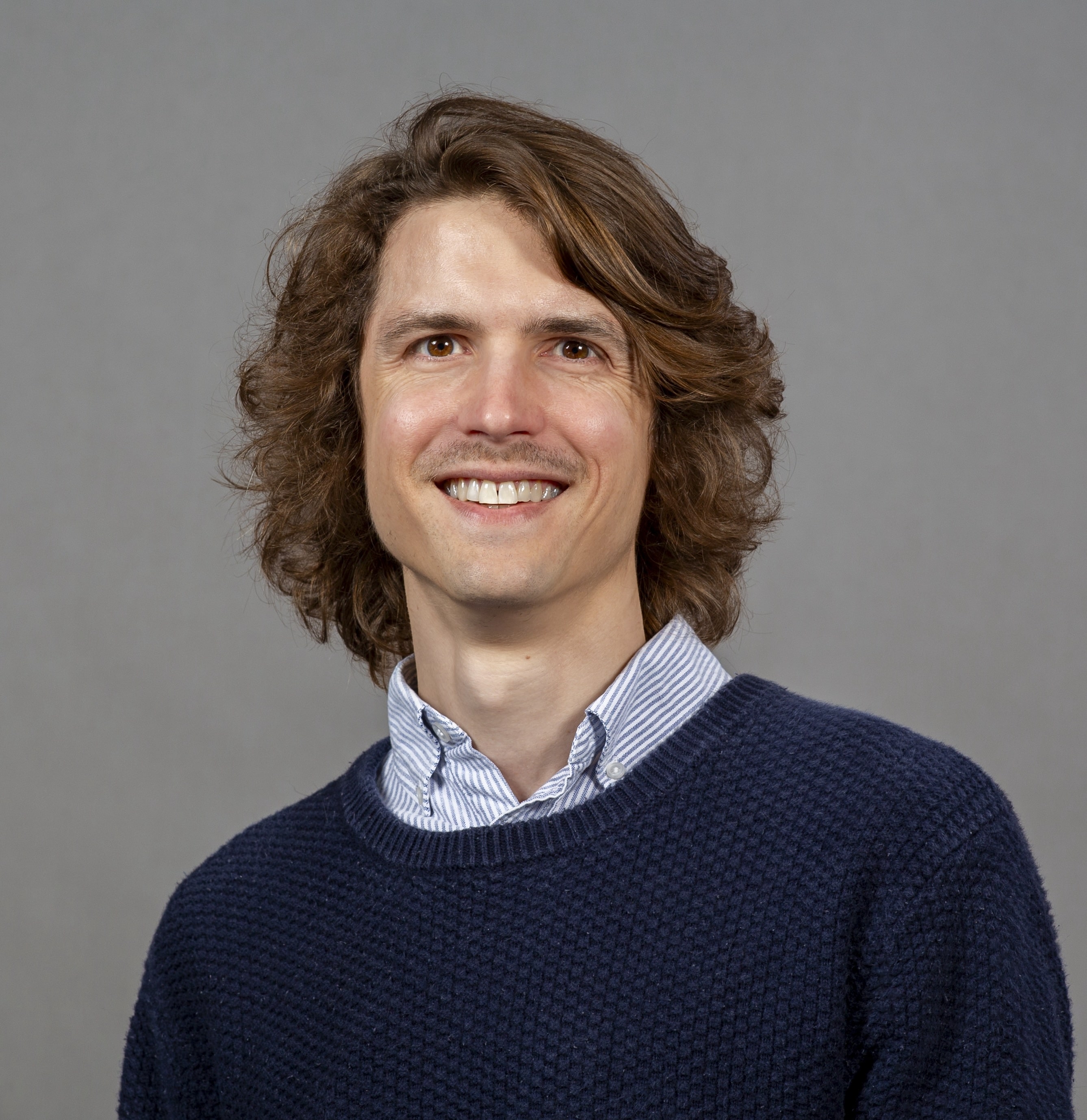}}]{Nathan Dahlin}
Nathan Dahlin joined the Department of Electrical and Computer Engineering at the University at Albany as an Assistant Professor in Fall 2023.
He holds a B.S., M.S., and Ph.D. in electrical engineering and an M.A. in applied mathematics from the University of Southern California.
Prior to joining the University at Albany, he was a Postdoctoral Research Associate at the University of Illinois Urbana-Champaign.
From 2008 to 2015, Nathan was a senior audio DSP research and development engineer at Audyssey Laboratories in Los Angeles.
His research is centered on machine learning, stochastic control, optimization, and microeconomics, with particular focus on applications in smart energy systems.
\end{IEEEbiography}

\begin{IEEEbiography}[{\includegraphics[width=1in,height=1.25in,clip,keepaspectratio]{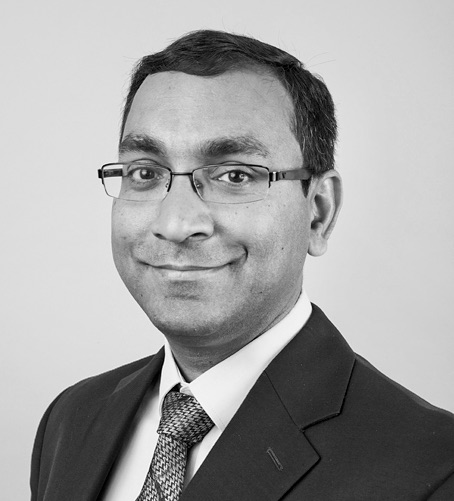}}]{Rahul Jain} (Senior Member, IEEE), is a Professor of Electrical and Computer Engineering and Computer Science at the University of Southern California (USC), a  Director of the USC Center for Autonomy and Artificial Intelligence, and a visiting research scientist at Google. He received his Ph.D. (EECS) from the University of California, Berkeley.  He has received numerous awards including the NSF CAREER award, the ONR Young Investigator award, and an IBM Faculty award. His interests span reinforcement learning, stochastic control, statistical learning, and game theory.
\end{IEEEbiography}

\begin{IEEEbiography}[{\includegraphics[width=1in,height=1.25in,clip,keepaspectratio]{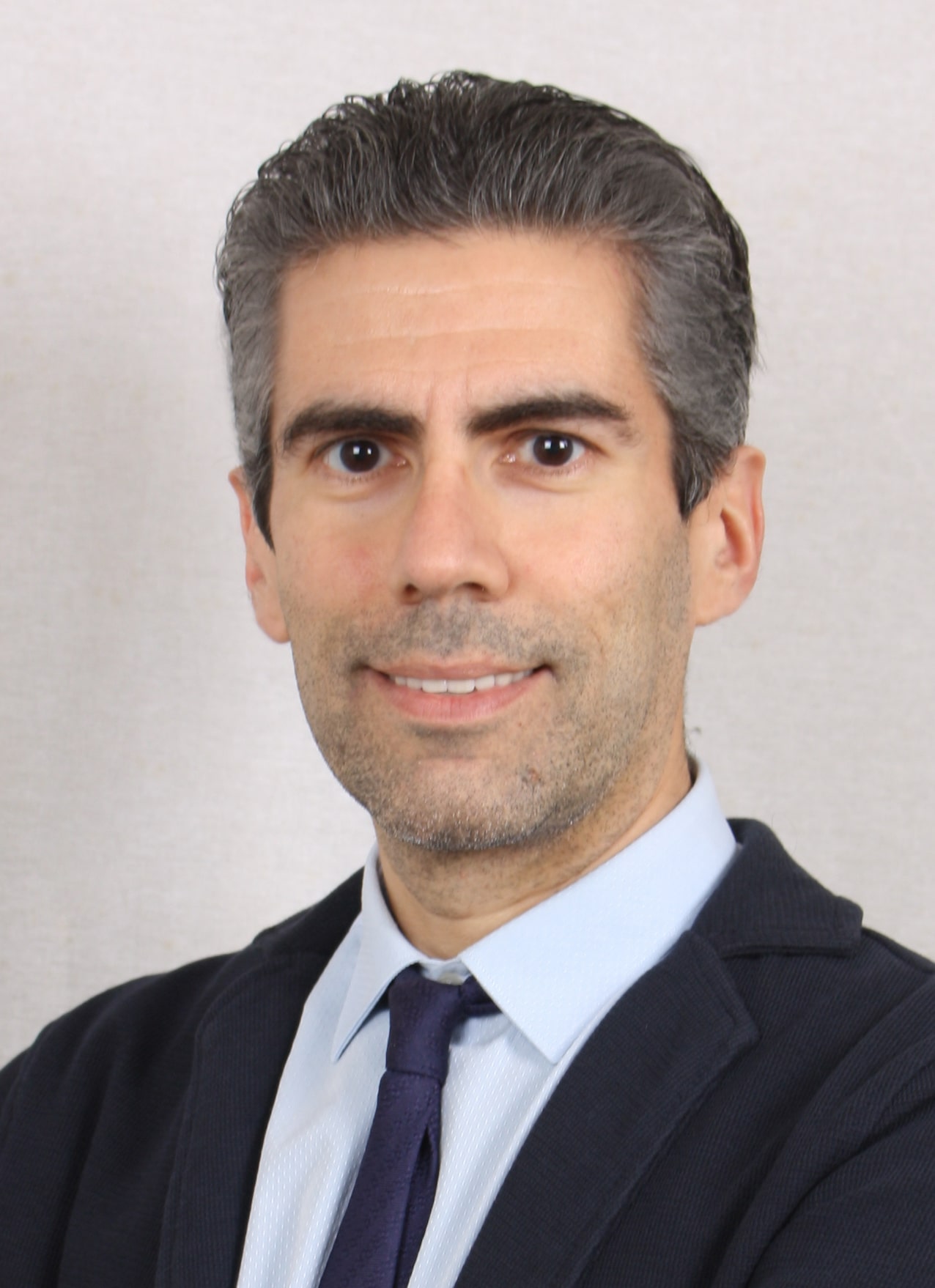}}]{Pierluigi Nuzzo} (Senior Member, IEEE) is an Associate Professor in the Department of Electrical Engineering and Computer Sciences at the University of California (UC), Berkeley, and an Adjunct Associate Professor at the University
of Southern California (USC), Los Angeles. Before joining UC Berkeley, he was the Kenneth C. Dahlberg Chair and Associate Professor of Electrical and Computer Engineering and Computer Science at USC, and a Co-Director of the USC Center for Autonomy and Artificial Intelligence (AI). He received
the Ph.D. in Electrical Engineering and Computer
Sciences from UC, Berkeley,
and B.S. and M.S. degrees in electrical and computer engineering from the University of Pisa and
the Sant’Anna School of Advanced Studies, Pisa,
Italy. His interests revolve around methodologies and tools for high-assurance design of cyber-physical systems (CPSs) and systems-on-chip, including the application of formal methods and optimization theory to problems in CPSs, electronic design automation (EDA), autonomy, security, and AI. His awards include the NSF CAREER Award, the DARPA Young Faculty Award, the Early-Career Awards from the IEEE Council on EDA and the Technical Committee on CPSs, the Okawa Research Grant, the UC Berkeley EECS David J. Sakrison Memorial Prize, and several best paper and design competition awards.
\end{IEEEbiography}

\end{document}